\newtheorem{theorem}{Theorem}
\newtheorem{remark}[theorem]{Remark}
\newtheorem{lemma}[theorem]{Lemma}
\newtheorem{corollary}[theorem]{Corollary}
\theoremstyle{definition}
\newcommand{\cat}[0]{\mathcal{C}}
\newcommand{\rev}[1]{\textcolor{black}{#1}}
\title{\LARGE \bf Efficient variable-length hanging tether parameterization for marsupial robot planning in 3D environments*}
\author{S. Mart\'inez-Rozas$^{1}$, D. Alejo$^{2}$, F. Caballero$^{2}$, L. Merino$^{2}$, M.A. Pérez-Cutiño$^{3}$, \\ F. Rodríguez$^{3}$, V. Sánchez-Canales$^{3}$,
I. Ventura$^{3}$ and J.M. Díaz-Báñez$^{3}$% <-this % stops a space
\thanks{*This work was partially supported by the grants: 1) INSERTION PID2021-127648OB-C31 and NORDIC TED2021-132476B-I00, funded by MCIN/AEI/ 10.13039/501100011033 and the “European Union NextGenerationEU/PRTR”, and 2) PID2020-114154RB-I00 and TED2021-129182B-I00, funded by
MCIN/AEI/10.13039/501100011033 and the “European Union NextGenerationEU/PRTR.}% <-this % stops a space
\thanks{$^{1}$S. Mart\'inez-Rozas is with Universidad de Antofagasta, Antofagasta, Chile. Email: {\tt\small simon.martinez$@$uantof.cl}}
\thanks{$^{2}$D. Alejo, F. Caballero and L. Merino are with Service Robotics Laboratory, Universidad Pablo de Olavide, Seville, Spain. Email: {\tt\small daletei$@$upo.es}, {\tt\small fcaballero$@$us.es}, {\tt\small lmercab$@$upo.es}}
\thanks{$^{3}$ F. Rodríguez, M.A. Pérez-Cutiño, V. Sánchez, I. Ventura and J.M. Díaz-Báñez is with Department of Applied Mathematics II, Universidad de Sevilla, Spain. Email: {\tt\small frodriguex@us.es}, {\tt\small migpercut$@$alum.us.es}, {\tt\small iventura$@$us.es}, {\tt\small vscanales$@$us.es}, {\tt\small dbanez@us.es}} 
}
\begin{document}
\maketitle
\thispagestyle{empty}
\pagestyle{empty}

\begin{abstract}

This paper presents a novel approach to efficiently parameterize and model the state of a hanging tether for path and trajectory planning of a UGV tied to a UAV in a marsupial configuration. Most implementations in the state of the art assume a taut tether or make use of the catenary curve to model the shape of the hanging tether. The catenary model is complex to compute and must be instantiated thousands of times during the planning process, becoming a time-consuming task, while the taut tether assumption simplifies the problem, but might overly restrict the movement of the platforms. In order to accelerate the planning process, this paper proposes defining an analytical model to efficiently compute the hanging tether state, and a method to get a tether state parameterization free of collisions. We exploit the existing similarity between the catenary and parabola curves to derive analytical expressions of the tether state. 
%This paper also presents how to adapt and integrate such parameterization into a motion planning approach for a tethered UAV-UGV robotic system \cite{smartinezr2023}. %In this way, the main contributions of this article are: 1) Present a simplified and efficient approach for modeling collision-free tether paths using a parabola approximation, enhancing computational speed; and 2) Integrating tether parameters directly into the trajectory state, improving optimization accuracy and reducing computation time.
The comparative analysis between the baseline and the proposed new method demonstrates that we can generate obstacle-free trajectories for UAVs, UGVs, and tethers in a fraction of the time while increasing the feasibility of the computed solutions. Finally, the source code of the method is publicly available.

\end{abstract}

%----------------------------------------------------------
\section{Introduction and Related Work}
\label{sec:introduction}
In recent years, there has been a notable increase in the development and research of tethered UAVs, reflecting a growing interest in their diverse applications. One of the main motivations is to carry out long-term missions with aerial vehicles, as these present significant challenges due to the limitations of current battery solutions \cite{robotics12040117}. A UAV tethered to a UGV is an interesting configuration, as the UGV can power the UAV through the tether for longer times given the higher payload of the former.  %According to this, an interesting configuration to allow long-duration flights of a UAV is a tethered robot configuration in which a UGV is tied to the UAV and powering it. 
This introduces a paradigm in robotic collaboration, offering distinct advantages over traditional standalone systems by combining the strengths of each of the robotic agents \cite{MooreIROS2018}. %As we venture UGV tied to UAV into scenarios requiring heightened enhanced situational awareness involving an extended operational endurance, the tethered approach proves invaluable, due to the capability to provide energy to the UAV, thus increasing fly time \cite{6961531}. 
When deploying a UGV tethered to a UAV in scenarios requiring increased situational awareness and extended operational endurance, the tethered configuration can become even more invaluable, not only providing the UAV with power to significantly extend its flight time \cite{6961531},  %In this way, the cable plays an important role in providing 
but also with safe high-bandwidth communications \cite{850822,9202196}. 

However, the tethering mechanism introduces several challenges, particularly in modeling the hanging tether state \cite{XiaoSSRR2018}. Unlike standalone systems, where each vehicle operates independently, the tether requires intricate and permanent coordination between the UGV and the UAV. Understanding and managing the state of the tether becomes a critical aspect, which requires sophisticated algorithms and real-time processing capabilities \cite{9561062}. 

\begin{figure}
  \includegraphics[width=0.2\textwidth]{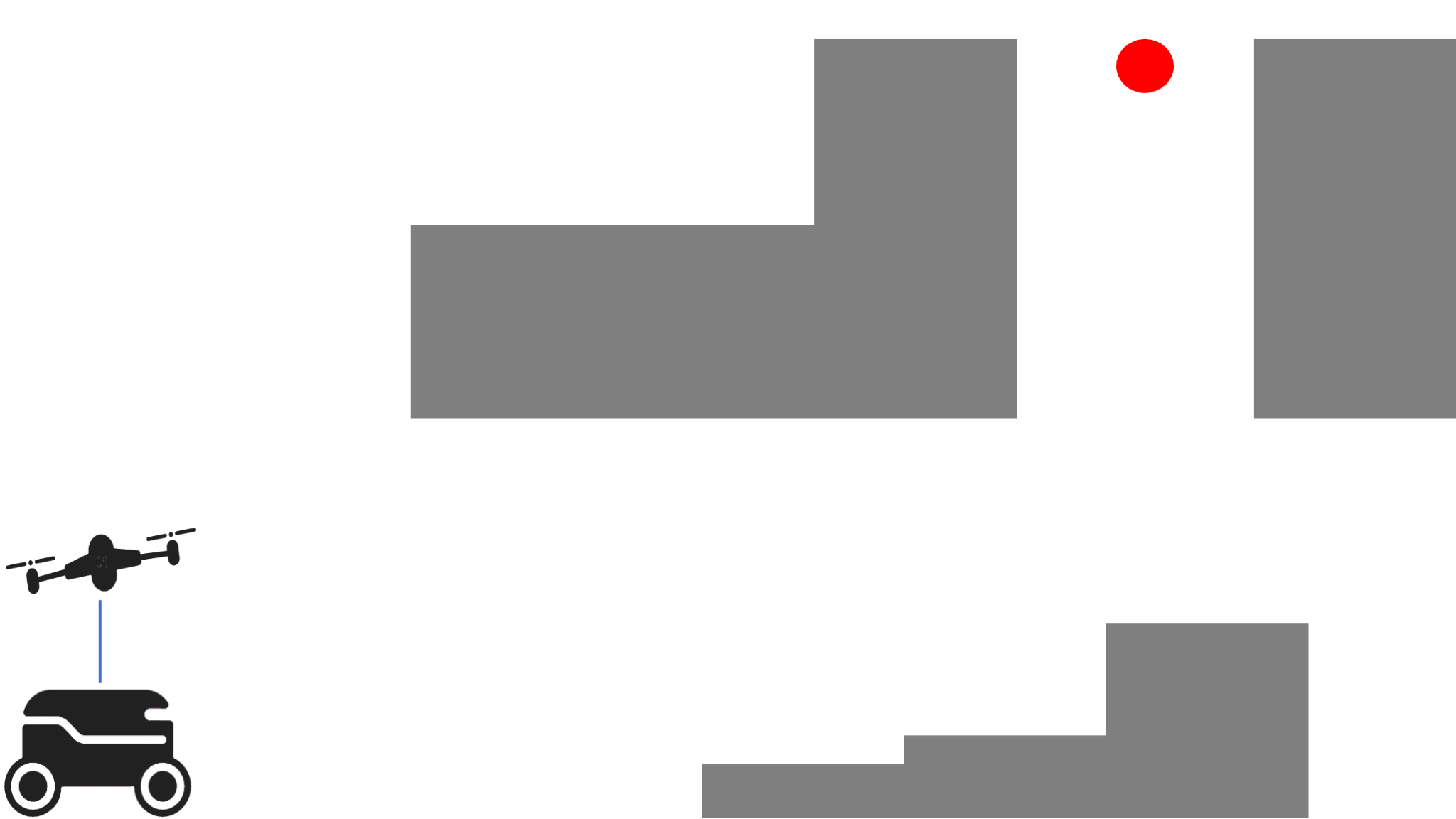}
  \hfill
  \includegraphics[width=0.2\textwidth]{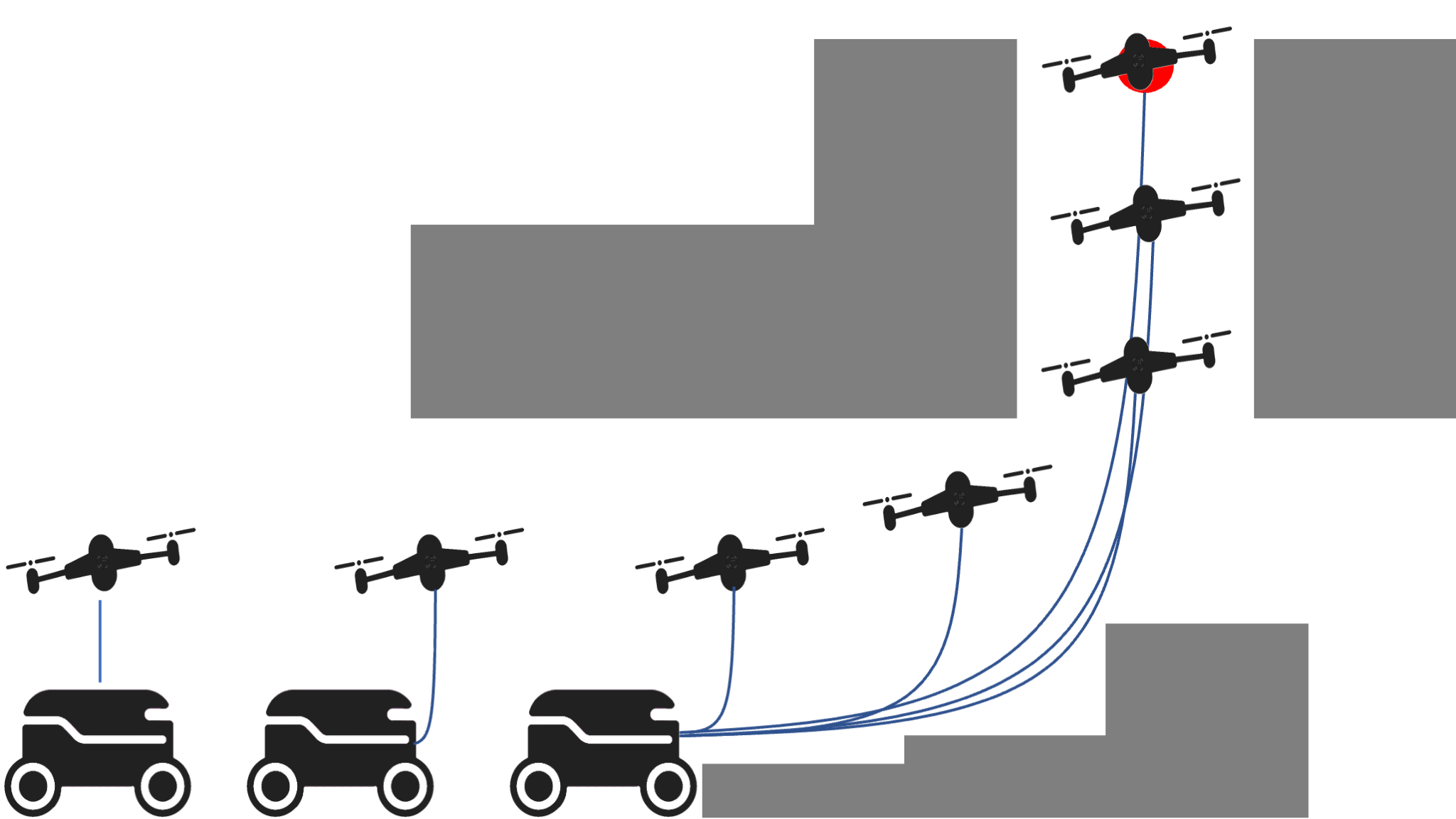}
  \caption{Simplified 2D sketch showing an example for motion planning of a tethered UAV-UGV with a hanging tether. (Left) Initial robots and tether configuration, and UAV goal (red circle). (Right) Sequence of robots positions and tether length to reach the given goal. Notice how the goal cannot be reached by means of a taut tether, a hanging tether must be considered in this case.}
  \label{fig:planning-setup}
\end{figure}

The state of the tether has traditionally been analyzed through parameterization, an approach that employs equations to represent its physical behavior, especially the catenary curve \cite{BOOKOFCURVES}. Numerous methodologies, with the aim of simplifying this process, approximate the tether as a straight line \cite{autonomousvisual}\cite{framworktether}\cite{uavfire}. This straight-line approximation is only suitable in scenarios where there is a direct line of sight between the tether endpoints, and thus it inherently restricts the exploratory range of the UAV.

In general, hanging-tether approaches allow UAVs to access a broader range of areas compared to straight tether setups; see Fig. \ref{fig:planning-setup} for an example. This concept has been explored by incorporating tether parameterization into localization or planning processes. For instance, Lima and Pereira \cite{9476778} use the catenary equation to determine the UAV's position.  % This concept has been explored by incorporating tether parameterization into the localization or planning processes, such as in the work conducted by Lima and Pereira \cite{9476778}, where using the catenary equation is feasible to find the UAV position. 
Similarly, in \cite{9364354}, the focus is on computing the state of a catenary tether to localize two UAVs attached at each end. This setup is specifically designed to suspend an object, providing a novel approach to object manipulation using UAVs while maintaining a constant tether length. Another interesting application of the catenary model is presented in \cite{LARANJEIRA2020107018} for underwater operations, where the catenary is used to monitor the status of a cable connected to an \emph{N}-number of ROVs (Remotely Operated Vehicles) performing exploration tasks, also with a constant tether length.

In \cite{8848946}, the parameterization of the tether is used in the localization and control stages to perform two autonomous motion primitives, reactive feedback-based position control and model-predictive feedforward velocity control, but is not used in the planning stage. An interesting approach is presented in \cite{drones7020073}, where a tied unmanned aerial vehicle (TUAV), named ``Oxpecke'', was designed for the inspection of stone-mine pillars. This system uses a sweeping (lawnmower) pattern path planning method intended to map and inspect an entire rectangular area, such as the surface of a pillar. However, the surface to inspect is simple (a rectangle), and the tether length is not directly included in the path planning.

%A general approach about the consideration of the tether in the planning stage is introduced in \cite{battocletti2024entanglementdefinitionstetheredrobots}, where the authors present the definition of tether entanglement problems. Specifically, it addresses the challenges posed by the presence of a tether, including the geometric constraints on the robot's motion due to the finite tether length. For that, different constraints are considered in the planning stage. However, the method is too general and mainly tested in ground points, so UAV implementation are not considering, and algo this method allow tether contact with the floor while entanglement desnt exit.

A comprehensive approach to incorporate a tether in the planning stage is presented in \cite{battocletti2024entanglementdefinitionstetheredrobots}, where the authors define the challenges associated with tether entanglement. Specifically, this work addresses the constraints imposed by the tether on the motion of the robot, particularly the limitations arising from the finite length of the tether. Various constraints are integrated into the planning stage to account for these challenges. However, the proposed method is limited and mainly focused on ground applications, 
thus limiting its applicability to UAVs. Additionally, the approach allows for tether contact with the ground, as long as it does not result in entanglement.

On the other hand, \cite{capitán2024efficientstrategypathplanning} focuses on the development of a path planning strategy for marsupial robotic systems composed of a UGV tethered to a UAV. The article introduces a sequential planning strategy called MASPA (Marsupial Sequential Path-Planning Approach), which allows calculating collision-free 3D trajectories for the tethered UAV-UGV system in complex scenarios, for which the UGV advances to a point where the UAV executes the take-off and then advances to a desired point. This method considers both the geometric limitations imposed by obstacles and the cable and the properties of the joint motion of both robots. A novel algorithm, the PVA (Polygonal Visibility Algorithm), is also presented to identify feasible take-off points and solve visibility problems for the UAV in a three-dimensional space. Despite the novelty of the approach, it is not able to consider coordinated planning of the UGV and the UAV at the same time.

In \cite{smartinezr2023}, the catenary approximation is used to parameterize the state of the tether and plan a collision-free trajectory, in which the UAV must achieve objectives using a hanging tether. However, using the catenary equation, the planning process becomes a time-consuming task, allowing only offline computations. %which makes the planning process to be carried out offline.

This paper focuses on reducing the complexity associated with the calculation of the variable length hanging tether. %The paper builds on the previous work of the authors \cite{smartinezr2023}, extending it with a new approach that efficiently calculates the tether state with a minimum representation error related to the actual state and a new parameterization of the tether curve in the trajectory optimizer for faster computation. Thus, 
The main contributions are listed below.

\begin{itemize}
    \item A new method for efficient computation of a collision-free catenary curve based on the parabola approximation. This paper proposes using the parabola curve to model the hanging tether curve, detailing the full pipeline, including the computation of the final catenary model. This method reduces the execution time of the path planner to great extent, since it avoids the computational complexity associated with the calculation of the catenary model for tether collision detection. This model also increases the feasibility of the trajectory planner approach, reaching an averaged 98\% of feasibility in the validation scenarios. 

    \item A direct parameterization of the tether in the trajectory state definition, which includes the parameters of the curve (parabola or catenary) in the system state vector. This allows a more accurate evaluation of geometric constraints (such as distance to obstacles) and reduces the optimization time \rev{by more than an order of magnitude} compared to previous methods, achieving safer and smoother trajectories for the UAV-UGV system. \rev{Such improvement opens the door to apply the proposed method to real-time local re-planning.}
\end{itemize}

%The experimental results will show how this new parameterization boosts the computation, while the parabola model will clearly improve the feasibility of the method over the catenary. 

%Thus, we include in the first stage, a decision problem to set the initial tether length, to quickly obtain a collision-free state for the whole system. Additionally, we replace the traditional catenary equation with a parabolic approximation to estimate the tether shape more efficiently. In the second stage of nonlinear optimization stage, we further simplify the process by parameterizing the tether instead of relying on the catenary model. This approach not only streamlines the representation of the curve but also facilitates more straightforward and efficient gradient calculations during optimization.

The paper is structured as follows. In Section \ref{sec:overview}, we show the general problem to be solved, whereas Section \ref{sec:approach} formalizes the solutions proposed. Section \ref{sec:path_planning} details the implementation of the solution within the planning stage. In Section \ref{sec:optimization_process}, we describe how curve parameterization is utilized to enhance the optimization process for trajectory computation. The experimental results are discussed in Section \ref{sec:experiments}. Finally, the paper is concluded in Section \ref{sec:conclusions}.

%----------------------------------------------------------
\section{Path and trajectory planning of tethered UAV-UGV in 3D overview}
\label{sec:overview}
The trajectory planning of a tethered UAV-UGV system can be summarized as the process of computing the sequence of obstacle-free UAV-tether-UGV positions, velocities, and accelerations that allow the UAV to reach a destination, given the starting configuration of the UAV-tether-UGV system (see Fig. \ref{fig:planning-setup} as an example). For the sake of generality, this paper focuses on a novel planning approach in which the tether has a variable and controllable length and might be hanging depending on the scenario. We can see in the example of Fig. \ref{fig:planning-setup} that the goal cannot be reached if only taut tether configurations are considered.

Using hanging tether configurations increases the chances of finding a suitable sequence of robot actions to the goal, but it also implies significant computational effort to model the tether shape. The catenary curve accurately models the shape of the tether and is the most common representation. Although fitting the catenary ${\cal C}$ can be easily implemented, the planner must solve this problem hundreds or thousands of times to check possible collisions of the tether with the environment in every UAV-UGV configuration, which becomes a time-consuming process. 

Our novel approach addresses the problem of efficiently computing the existence of a free-collision catenary ${\cal C}_{AB}$ that connects the position of the ground vehicle, $A$, and the position of the aerial vehicle, $B$. Adopting a common terminology, we call $A$ and $B$ the \emph{suspension points}, and the horizontal and vertical distance between $A$ and $B$ are called \emph{span} and \emph{sag}, respectively.

This paper proposes an approach that builds on top of the motion planning method for tethered UGV-UAV system presented in \cite{smartinezr2023}, where the computation of the catenary ${\cal C}$ is required as a representation of the tether's shape at two levels. First, in the path planning stage, they focus on a collision-free path for all agents. Second, in an optimization stage, where the objective is to optimize the previous path to obtain a trajectory, considering several constraints related to the UGV-UAV robotic configuration. 

\section{Efficient computation of a collision-free catenary curve between two points}
\label{sec:approach}

% Delete the text and write your Theory/ Background Information here:
%------------------------------------

%In this section, we will study the problemo find a catenary joining two points
%in the 3D space $A$ and $B$ that does not collide with obstacles $O$ in the environment
%$C_{obs}=\cup_{i=1}^NO_i$.

%Deciding whether there exists a collision-free catenary with bounded length in a three-dimensional scenario or not is a crucial task in some problems in Robotics, such as the path planning for a tethered aerial robot \cite{martinez2021optimization}.  

%In this section, we address the problem of the existence of a free-collision catenary ${\cal C}_{AB}$ connecting the ground vehicle position, $A$, and the aerial vehicle position, $B$. Adopting a common terminology, we call $A$ and $B$ the \emph{suspension points}, and the horizontal and vertical distance between $A$ and $B$ are called the \emph{span} and the \emph{sag}, respectively.

In this section, we present a new method for efficient computation of a collision-free catenary curve based on the parabola approximation. As was previously commented, the main idea is that the method enables the planner to calculate trajectories faster and more efficiently, since it avoids the computational complexity associated with the calculation of the catenary model for tether collision detection.

Considering the catenary as a planar curve, our input consists of a set of 2D obstacles $\cal{O}$ defined by polygons in a plane denoted $\pi$. This plane $\pi$ is perpendicular to the ground and passes through points $A$ and $B$. 

%To simplify the analysis, we assume that the obstacles are convex polygons. {\color{blue}
%Note that a non-convex polygon can be decomposed into a collection of convex polygons. Do we need convex polygons?}

We define a curve $C$ as collision-free in $\pi$ if it does not intersect with any obstacle in $\pi$.  Therefore, our focus is directed towards the following two-dimensional problem:

\vspace{.25cm}
\textsc{Catenary Decision Problem (CDP):} \emph{Given a set $\cal O$ of polygons in $\pi$, determine whether there exists a collision-free catenary 
${\cal C}_{AB}$ in $\pi$ linking $A$ and $B$ with length at most $L$.} 
%\vspace{.5cm}

%\subsection{Naive approach: catenary length sampling}
%\label{sec:naive_approach}

%================================================
%\subsection{The Parabola Decision Problem}
%\label{sec:dsproblem}

In \cite{smartinezr2023} a numerical approach was considered to determine the existence of a collision-free catenary suspended between points $A$ and $B$. The approach involves examining catenaries with increasing length, starting from $l = d(A,B)$ to $l=L$, with the addition of a specified increment $\Delta l$ until a collision-free trajectory is identified (success) or the maximum length is reached (failure). Furthermore, in each iteration, the catenary shape is sampled to verify for potential collisions.
Both the calculation of the catenary curve and the collision test are time-consuming. This involves solving transcendental equations, which is computationally demanding \cite{behroozi2014fresh}.
%The problem with this approach is that we have to first determine the catenary that passes through $A$ and $B$ with a given length.
%Furthermore, in order to check for collision on the catenary, we have to sample its shape. Hence, the sampling length $d$ of the catenary is a crucial parameter in this method, as a small $d$ could waste efforts by repeteadly calling to the collision checker, whereas a big step could make the method not able to detect some collisions between the catenary and the obstacles. %In addition, a careful tuning of the step length $\Delta l$ is crucial to make the algorithm accurate and efficient.
In addition, low values on $\Delta l$ could waste efforts by repeatedly calling the collision checker with very similar curves. In contrast, high values $\Delta l$ could cause the method to overlook some catenary lengths that might be collision-free.

\begin{figure}[t!]
  \centering
   \includegraphics[scale=0.35]{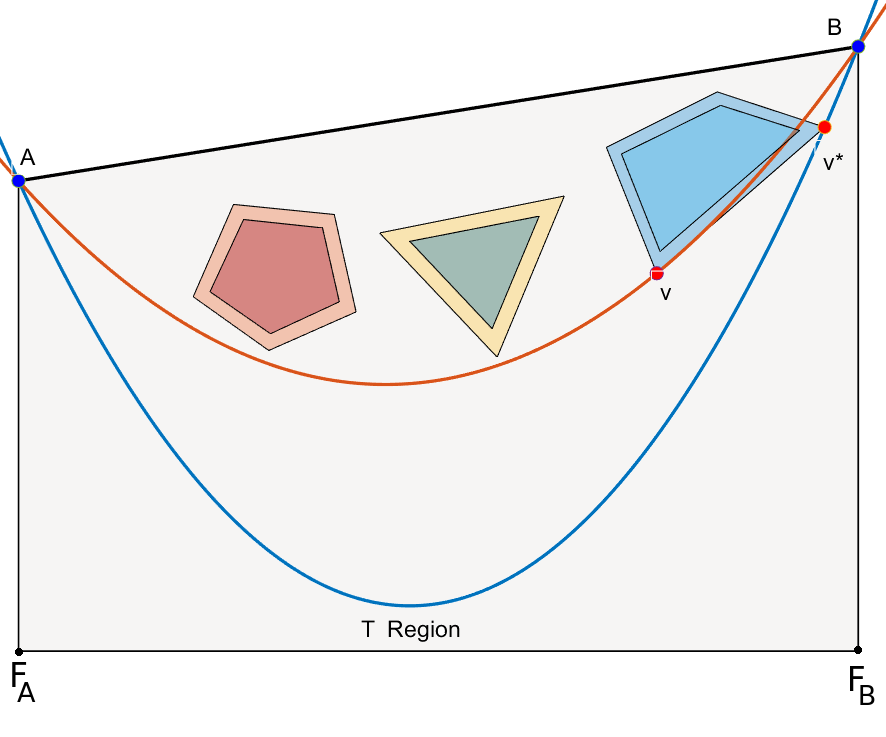}  
\caption{Polygons inside the $T$ region, which is delimited by the black trapezoid $\overline{ABF_AF_BA}$. $A$ and $B$ are the suspension points; $F_A$ and $F_B$ are their projections on the floor. The blue polygon is \emph{weakly inscribed}  in the parabolic region $R(v^*)$ and $\overline{AB}$, delimited by the blue parabola.}  %\caption{Polygons inside $T$ are \emph{weakly inscribed}  in the parabolic region  $R(v^*)$. }
\label{fig:PDP}
\end{figure}

To overcome these issues, we have devised an alternative approach using a simpler curve.  In certain applications, such as the design of transmission overhead lines \cite{hatibovic2018algorithm}, the catenary has been replaced by a parabolic curve, as the parabola is a good approximation of a catenary if the sag is small \cite{hatibovic2020comparison}. However, in our scenario, the sag is relatively large, and we have to be careful with the parabolic approximation.
Moreover, our approach requires examining the reverse approximation: Given a collision-free parabola, we would like to compute an approximated catenary.

In the remainder of the section, we describe our proposed procedure for obtaining collision-free catenaries between two points. Initially, we address the decision problem using parabolas instead of catenaries. Subsequently, upon discovering a collision-free parabolic curve, we introduce a numerical method to approximate the parabola with a catenary. Finally, we ensure that the obtained catenary %remains
is collision-free by strategically expanding the obstacles.

Let $T$ be the open trapezoidal region in $\pi$ below the segment $\overline{AB}$, bounded by the ground and the two vertical lines passing through $A$ and $B$, as illustrated in Fig. \ref{fig:PDP}.

A parabolic curve between the points $A$ and $B$ divides the plane $\pi$ into two regions, one convex and one non-convex. Given a set of obstacles in the plane $\pi$, a parabolic curve is collision-free when all obstacles are contained in one of the two regions.

Let $P_{AB}(v)$ be the parabola defined by the points $A$, $B$, and $v$, where $v$ is a vertex of an obstacle in $T$. Denote by $\ell(v)$ the length of the parabola $P_{AB}(v)$ and
$R(v)$ the convex region defined by the segment $\overline{AB}$ and the parabola $P_{AB}(v)$.
%\begin{definition}
A polygon is \emph{weakly inscribed} in the region $R(v)$ if it is contained in $R(v)$
and at least one vertex of the polygon lies on the parabola $P_{AB}(v)$. 
%We call such vertex a \textit{tangent vertex}. 
%\end{definition}
%Clearly, if a polygon $O$ is weakly inscribed in the region $R(v)$, then its vertices touching the boundary of $R(v)$ belong to its convex hull $CH(O)$ and $CH(O)$ is also weakly inscribed in $R(v)$. Thus,
%our problem reduces to finding a parabola that does not collide with the convex hull of any polygon within $T$. 
Therefore, we are interested in solving the following auxiliary problem:

\vspace{.25cm}
\textsc{Parabola Decision Problem (PDP):} 
\emph{Given a set $\cal O$ of polygons %completely 
contained in $T$, decide whether there exists a collision-free parabola $P_{AB}(v)$ with a length at most $L$ passing through a vertex $v$ of a polygon in $\cal O$.}
%\vspace{.5cm}

%\ma{We define some notation...}

%================================================
\subsection{An iterative approach for solving PDP}
\label{solvingPDP}
%The proposed method for solving PDP involves computing the length of a parabolic arc or determining a set of polygons intersecting this  arc and, for sort, we use some notation: The length of $P_{AB}$ is $|P_{AB}|$ and $P_{AB} \cap \mathcal{O}$ is the subset of polygons of $\mathcal{O}$ that intersect $P_{AB}$. 
Given a polygon $O=\langle v_1,\cdots,v_n\rangle$ contained in  $T$, we say that a parabolic arc hanging from points $A$ and $B$, $P_{AB}$, intersects with the polygon when it intersects a face of the polygon, i.e., there are points on a face that lie both above and below the curve. We denote  $P_{AB} \cap \mathcal{O}$  the subset of polygons of $\mathcal{O}$ that intersect the parabolic arc $P_{AB}$. We call this set $\mathcal{O'}$ and $CH(\mathcal{O'})$ its convex hull. In what follows, we will denote the length of $P_{AB}$ as $|P_{AB}|$ when we do not expressly know the third point $v$ through which the parabola passes.

\begin{lemma} \label{cor: longest_parabola}  
  Let $O=\langle v_1,\cdots,v_n\rangle$ be a convex polygon contained in the open region $T$ and let $v^*$ be the vertex of $O$ such that $\ell(v^*)=\max_{v_i \in O}\ell(v_i).$ Then the polygon $O$ is weakly inscribed in $R(v^*)$. 
  Furthermore, for any vertex $v_i \in O$ such that $\ell(v_i)<\ell(v^*)$, $O$ is not weakly inscribed in $R(v_i)$.
\end{lemma}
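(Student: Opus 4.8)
The plan is to reduce the whole statement to one monotonicity fact about the one‑parameter family of vertical‑axis parabolas through $A$ and $B$. Writing $A=(x_A,y_A)$, $B=(x_B,y_B)$ with $x_A<x_B$, every parabola with vertical axis through $A$ and $B$ has the form $f_\lambda(x)=\lambda\,(x-x_A)(x-x_B)+L_{AB}(x)$, where $L_{AB}$ is the affine interpolant (the chord $\overline{AB}$) and $\lambda\in\mathbb{R}$; it hangs strictly below $\overline{AB}$ exactly when $\lambda>0$. First I would record two facts. (i) \emph{Nesting:} for $0<\lambda_1\le\lambda_2$ and any $x$ in the open strip $(x_A,x_B)$, $f_{\lambda_1}(x)-f_{\lambda_2}(x)=(\lambda_1-\lambda_2)(x-x_A)(x-x_B)\ge 0$, with strict inequality when $\lambda_1<\lambda_2$; so over the open strip the graph of $f_{\lambda_1}$ lies (strictly) above that of $f_{\lambda_2}$, and $R(\lambda_1)\subseteq R(\lambda_2)$. (ii) \emph{Length monotonicity:} $\ell(\lambda)$ is strictly increasing on $(0,\infty)$. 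For (ii) I would substitute $u=x-\tfrac{x_A+x_B}{2}$ to get $\ell(\lambda)=\int_{-s/2}^{s/2}\sqrt{1+(2\lambda u+m)^2}\,du$ with $s=x_B-x_A$ and $m$ the chord slope, differentiate under the integral sign, and pair $u$ with $-u$: since $\psi(t)=t/\sqrt{1+t^2}$ is odd and strictly increasing, the paired integrand $2u\big[\psi(2\lambda u+m)-\psi(m-2\lambda u)\big]$ is strictly positive for $u>0$, $\lambda>0$, whence $\ell'(\lambda)>0$.

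Next I would observe that every point $p\in T$ lies strictly below the chord and strictly inside the vertical strip, hence lies on exactly one parabola $P_{AB}(p)=\{y=f_{\lambda(p)}(x)\}$, and $\lambda(p)=\big(y_p-L_{AB}(x_p)\big)/\big((x_p-x_A)(x_p-x_B)\big)>0$ since both numerator and denominator are negative on $T$. In particular, for a polygon vertex $v$ this $\lambda(v)$ is the parameter of $P_{AB}(v)$, and by (ii) $\ell(v)=\ell(\lambda(v))$ is a strictly increasing function of $\lambda(v)$. Therefore the vertex $v^*$ maximizing $\ell$ over $O$ is precisely a vertex maximizing $\lambda$; set $\lambda^*:=\lambda(v^*)$.

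For the first claim, fix any vertex $v_i$ of $O$. Since $\lambda(v_i)\le\lambda^*$, fact (i) gives $f_{\lambda(v_i)}\ge f_{\lambda^*}$ on the open strip, so $v_i$ — which lies on the graph of $f_{\lambda(v_i)}$ — lies on or above the graph of $f_{\lambda^*}=P_{AB}(v^*)$; combined with $v_i$ lying below the chord, this puts $v_i\in\overline{R(v^*)}$. As $R(v^*)$ is convex (as stated in the excerpt) and contained in the strip $[x_A,x_B]$, and $O$ is the convex hull of its vertices, $O\subseteq R(v^*)$; since the vertex $v^*$ itself lies on $P_{AB}(v^*)$, $O$ is weakly inscribed in $R(v^*)$. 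For the second claim, take a vertex $v_i$ with $\ell(v_i)<\ell(v^*)$; by the strict monotonicity in (ii), $\lambda(v_i)<\lambda^*$, so by fact (i) the graph of $P_{AB}(v^*)$ lies strictly below that of $P_{AB}(v_i)$ on the open strip. The point $v^*\in O$ is strictly inside the strip, hence strictly below $P_{AB}(v_i)$, so $v^*\notin R(v_i)$; thus $O\not\subseteq R(v_i)$ and $O$ is not weakly inscribed in $R(v_i)$.

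I expect the main obstacle to be the strict length‑monotonicity statement (ii): the raw integrand $2u\,\psi(2\lambda u+m)$ is not sign‑definite pointwise, so the symmetrization/pairing step (and the monotonicity of $\psi$) is what makes it work. Everything else is routine once one commits to the vertical‑axis normal form for the parabola and is careful about open versus closed regions — in particular, the requirement that a vertex of $O$ lie on the boundary curve $P_{AB}(v)$ forces reading $R(v)$ as its closure in the definition of ``weakly inscribed,'' which is the convention used above.
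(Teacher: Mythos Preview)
Your argument is correct. Both the paper and you reduce the lemma to the same underlying fact---the one-parameter family of vertical-axis parabolas through $A$ and $B$ is totally ordered by inclusion of the regions $R(\cdot)$, with arc length monotone along the family---but you reach that fact by a different route. The paper simply invokes that two distinct parabolas meet in at most two points; since those points must be $A$ and $B$, the curves cannot cross in the open strip and hence the regions are nested, with the ``$\ell(v_i)\le\ell(v^*)\Rightarrow R(v_i)\subseteq R(v^*)$'' direction asserted without further justification. You instead write the family in the normal form $f_\lambda(x)=\lambda(x-x_A)(x-x_B)+L_{AB}(x)$, get nesting by a one-line computation, and then prove the arc-length monotonicity $\ell'(\lambda)>0$ directly via the symmetrization trick. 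Your route is more laborious but more self-contained: it actually establishes the implication ``shorter length $\Rightarrow$ smaller region'' that the paper takes for granted, and it makes explicit why $v^*$ lies strictly below $P_{AB}(v_i)$ in the second part (where the paper's phrasing ``$R(v^*)\nsubseteq R(v_i)$, thus $v^*\notin R(v_i)$'' is a slight logical shortcut). The paper's approach is quicker if one is willing to accept the length--inclusion correspondence as geometrically evident.
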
	

\begin{proof}
Since two different parabolas intersect at most in two points, for all vertices $v_i\in O$ with $\ell(v_i) \leq \ell(v^*)$, it follows that $R(v_i)\subseteq R(v^*)$ and
%. As a result, 
the polygon $O$ is weakly inscribed in the region $R(v^*)$. On the other hand, for all $v_i$ so that $\ell(v_i) < \ell(v^*)$, $R(v^*) \nsubseteq R(v_i)$. Thus, $v^* \notin R(v_i)$ and $O$ is not weakly inscribed in $R(v_i)$. Fig. \ref{fig:PDP} illustrates the proof.
\end{proof}

As a direct consequence of Lemma \ref{cor: longest_parabola}, we state the following remark:
\begin{remark}
\label{rmk:longest_parabola} 
Let $O$ be a polygon
contained in $T$ that intersects a parabolic arc $P_{AB}$. 
Then the parabolic arc $P'_{AB}$ of minimum length that does not intersect with $O$ such that $|P'_{AB}|>|P_{AB}|$ in the span $(A, B)$ is the parabola of maximum length passing through a vertex of $CH(O)$.
\end{remark}

Algorithm \ref{alg:decision_problem} illustrates an overview of the approach to solving the parabola decision problem. We use the result presented in Remark \ref{rmk:longest_parabola} to progressively increase the length of the parabola until a specified stopping condition is met. To initiate the process, the algorithm computes the line segment $\overline{\rm AB}$ in line 1, representing the parabolic arc of minimum length from $A$ to $B$. The %\inma{quitaría esto: value of the } 
current parabola is stored in $P_{AB}$ and is subsequently updated throughout the algorithm. 
The fundamental invariant upheld at each step ensures that any parabola with a length less than 
%The invariant maintained in every step of the algorithm is that any parabola with length lower than 
$|P_{AB}|$ is not valid.
The algorithm enters a loop from lines [2-11], setting stop conditions for PDP in lines [2-8].

If the current parabola touches the ground or if its length reaches the maximum value, then there is no collision-free parabola hanging from $A$ to $B$, line [2-4]. Then we compute the set of obstacles $\mathcal{O'}$ intersecting with $P_{AB}$, line 5. If this set is empty, $P_{AB}$ returns as a valid parabola. Otherwise, we compute the minimum length parabola $P'_{AB}$ that does not intersect with $\mathcal{O'}$ such that $|P'_{AB}|>|P_{AB}|$. Notice that any parabola of length less than $P'_{AB}$ is not collision-free, so the established invariant is preserved. From Remark \ref{rmk:longest_parabola}, $P'_{AB}$ is the parabola of maximum length that passes through the convex hull of $\mathcal{O'}$; hence, we update $P_{AB}$ with this parabola, lines [9-10]. At this point, a full cycle of the algorithm has been completed, so we return to line 2. The iterative process guarantees termination, as the algorithm systematically increases the length of the current parabola.

\begin{algorithm}[t!]
\caption{Given the suspension points $A$ and $B$, and a set of polygons $\mathcal{O}$ inside $T$, %$T_{AB}$,
returns, if it exists, a collision-free parabolic arc $P_{AB}$ with maximum length $L$. 
$P_{AB}$ cannot touch the ground.
}
\label{alg:decision_problem}
\begin{algorithmic}[1]
\Require $A, B, \mathcal{O}, L$.
\Ensure Collision-free parabolic arc $P_{AB}$.

 \State $P_{AB} \gets \overline{\rm AB}$
\If{${P}_{AB}$ touches the ground \textbf{or} $|P_{AB}|> L$} 
    \State \Return None %\Comment{Parabolic arc can not touch the ground}
\EndIf

\State $\mathcal{O}' \gets P_{AB} \cap \mathcal{O}$
\If{$\mathcal{O}' = \emptyset$}
    \State \Return $P_{AB}$
\EndIf
% \State $P_{AB} \gets \text{ parabola so that }CH(\mathcal{O}')$ \text{ is weakly inscribed }

%\State $P_{AB} \gets P_{AB}(v)$ \text{with} $CH(\mathcal{O}')$ \text{weakly inscribed in} $R(v)$
\State $v^* \gets \text{argmax}_{v_i \in CH(\mathcal{O}')}\ell(v_i)$
\State $P_{AB} \gets P_{AB}(v^*)$
\State GoTo (line 2)
\end{algorithmic}  
\end{algorithm}

%\subsubsection{Correctness of the approach}

% Lemma \ref{cor: longest_parabola}   guarantees the correctness of the approach outlined in Algorithm \ref{alg:decision_problem}. 

%================================================
\subsection{Computing the collision-free catenary from a parabola}
\label{sec:approxlength}

Algorithm \ref{alg:decision_problem} yields a parabolic curve that solves the PDP. In this section, we introduce three methods for approximating the parabolic curve with a catenary, representing the shape of a hanging tether. Following an experimental comparison, we choose one of these methods.

\subsubsection{Method 1. Approximation by length}
\label{sec:bylength}
In our first method, we just impose the catenary to be of the same length as the obtained parabola. Let $\lambda$ be the length of the parabola that connects the suspension points $A=(0, h_1)$ and $B=(S, h_2)$, and let $(x_{min}, y_{min})$ be the coordinates of the lowest point of the catenary with length $\lambda$ passing through $A$ and $B$. Note that $h_1, h_2, x_{min},  y_{min} > 0$ and $\lambda > d(A,B)=\sqrt{S^2 +(h_2-h_1)^2}$.
Taking the general equation of the catenary \cite{hatibovic2020comparison}: 
\begin{equation}
y_{cat}(x)=2c\sinh\left(\frac{x-x_{min}}{2c}\right)+y_{min}, c>0,
\end{equation}
the solution of the following system of non-linear equations provides the expected catenary: 
\begin{equation}
\left.
    \begin{array}{lll}
    2c \sinh^2 {\left(\frac{-x_{min}}{2c}\right)} + y_{min} & = & h_2 \\
    2c \sinh^2 {\left(\frac{S-x_{min}}{2c}\right)} + y_{min} & = & h_1 \\
    c\left[ \sinh{\left( \frac{S-x_{min}}{c}\right)}-   \sinh{\left( \frac{-x_{min}}{c}\right)} \right] & = & \lambda
    \end{array}
    \right\}
\end{equation}

\medspace

%The independent terms of the system are the coefficients of the general equation: $c$, $x_{min}$, and $y_{min}$. 

%The system has a unique solution since $\ell > \sqrt{S^2 +(h_2-h_1)^2}$.

\subsubsection{Method 2. Fitting the catenary with a parabola}
\label{sec:approxfitting}
An alternative method is to find a catenary that fits the parabola, specifically the catenary that minimizes the maximum vertical distance with the parabola. To design an efficient fitting algorithm, some mathematical properties are used.
%The following result can be proved by elemental Calculus.

From the convexity of the catenary, the following Lemma can be stated:
\begin{lemma}
\label{le:big_area_small_length}
    Let $\cat_1$, $\cat_2$ be two catenaries hanging from the same suspension points $A=(x_1, h_1)$ and $B=(x_2, h_2)$ such that $\forall x\in (x_1, x_2)$ $\cat_1(x)> \cat_2(x)$. Then the area under $\cat_1$ in $[x_1, x_2]$ is greater than the area under $\cat_2$ in the same interval, but its length is lower.
\end{lemma}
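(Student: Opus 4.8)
The plan is to dispatch the two assertions separately. The area comparison is immediate: since $\cat_1(x)>\cat_2(x)$ for every $x\in(x_1,x_2)$ and both functions are continuous, monotonicity of the integral yields $\int_{x_1}^{x_2}\cat_1>\int_{x_1}^{x_2}\cat_2$, i.e. the region between $\cat_1$ and the ground over $[x_1,x_2]$ has strictly larger area than the one between $\cat_2$ and the ground. (Here ``area under'' is understood relative to the ground; relative to the chord $\overline{AB}$ the inequality would flip, so the intended reading is the one making the statement true.) Nothing else is needed for this half, so the work is entirely in the length claim.

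For the length, the plan is to deform $\cat_1$ continuously into $\cat_2$ through convex graphs sharing the endpoints $A$ and $B$, and to show the arc length grows along the deformation. I would use the straight-line homotopy $\gamma_s:=(1-s)\cat_1+s\cat_2$, $s\in[0,1]$, so that $\gamma_0=\cat_1$ and $\gamma_1=\cat_2$. Three facts about this family are all that is required: (i) each $\gamma_s$ is smooth and \emph{strictly} convex, being a convex combination of the strictly convex catenaries $\cat_1,\cat_2$ (each catenary has everywhere-positive second derivative); (ii) $\gamma_s(x_1)=h_1$ and $\gamma_s(x_2)=h_2$ for all $s$, so $\partial_s\gamma_s$ vanishes at $x_1$ and $x_2$; and (iii) $\partial_s\gamma_s=\cat_2-\cat_1<0$ on $(x_1,x_2)$.

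Next I would differentiate the arc-length functional $\ell(s):=\int_{x_1}^{x_2}\sqrt{1+(\partial_x\gamma_s)^2}\,dx$ with respect to $s$ and integrate by parts in $x$. The boundary term carries the factor $\partial_s\gamma_s$ evaluated at the endpoints, which vanishes by (ii), leaving
\[
\ell'(s)=-\int_{x_1}^{x_2}\kappa_s(x)\,(\partial_s\gamma_s)(x)\,dx=\int_{x_1}^{x_2}\kappa_s(x)\,\bigl(\cat_1(x)-\cat_2(x)\bigr)\,dx,
\]
where $\kappa_s=\partial_x^2\gamma_s\,(1+(\partial_x\gamma_s)^2)^{-3/2}>0$ by (i). Since $\cat_1-\cat_2>0$ on the open interval, $\ell'(s)>0$ for every $s$, hence $\ell$ is strictly increasing on $[0,1]$ and $\mathrm{length}(\cat_1)=\ell(0)<\ell(1)=\mathrm{length}(\cat_2)$, which is the claim.

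I do not expect a genuine obstacle here: the only point needing a line of justification is differentiating under the integral sign and the integration by parts, which are routine because the integrand is real analytic in $(s,x)$ on a compact rectangle. It is worth emphasizing that the argument uses nothing about catenaries beyond strict convexity and common endpoints, so it in fact proves the more general statement that among convex graphs with the same endpoints a pointwise-higher one is strictly shorter; an alternative but less elementary route would be to invoke the minimum-potential-energy characterization of the catenary among curves of prescribed length.
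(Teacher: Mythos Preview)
Your argument is correct. The area half is, as you say, immediate from monotonicity of the integral; the length half via the first variation of arc length is clean and valid, and the boundary term indeed drops because $\partial_s\gamma_s$ vanishes at $x_1,x_2$. The sign comes out right: $\ell'(s)=\int\kappa_s(\cat_1-\cat_2)>0$, hence $\ell(0)<\ell(1)$.

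For comparison: the paper does not actually prove this lemma; it only remarks that it follows ``from the convexity of the catenary.'' The most likely one-line argument behind that remark is the nested-convex-body perimeter inequality: the region bounded above by the chord $\overline{AB}$ and below by $\cat_i$ is convex (it is the intersection of the epigraph of the convex $\cat_i$ with a half-plane), and $\cat_2<\cat_1$ gives $R_1\subset R_2$; since a convex body contained in another has no larger perimeter, and both perimeters share the chord, one gets $\mathrm{length}(\cat_1)\le\mathrm{length}(\cat_2)$, with strict inequality because $R_1\neq R_2$. Your variational route is longer but has the virtue of being self-contained and, as you note, proves the sharper statement that among \emph{any} strictly convex graphs with fixed endpoints the pointwise-higher one is strictly shorter; the perimeter-monotonicity argument also yields this generality, requiring only that the lower curve be convex.
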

% \begin{proof}
%     In a concave region, the curve joining two consecutive vertices $v_1, v_2$ is the curve within the region with the lowest length touching $v_1$ and $v_2$. From this we get than any curve touching $A$ and $B$ in the region defined by $\cat_1$ in the interval $[x_1, x_2]$ has greater length than $\cat_1$. On the other hand, the area under $\cat_1$ can be divided in the area under $\cat_2$ plus the area between both curves; hence the area under $\cat_1$ is bigger than the area under $\cat_2$.
% \end{proof}

We use an additional result extracted from \cite{parker2010property}:
\begin{theorem}
\label{theo:cool_cat_property}
    Given a catenary curve $\cat$ and any horizontal interval $(x_1, x_2)$, the ratio defined by the area under $\cat$ divided by the length of the curve in that interval is independent of the value of $x_1$ and $x_2$.
\end{theorem}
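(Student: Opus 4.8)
The plan is to normalize the catenary and then exploit the fact that, for a catenary, the arc-length element is proportional to the ordinate measured from the directrix. First I would write the curve in the canonical form $y(x)=a\cosh(x/a)$ — every catenary is a horizontal translate of such a curve, and a horizontal translation changes neither the area nor the arc length over matching intervals, so there is no loss of generality. In the notation of the general catenary equation introduced above, $a$ plays the role of $c$, and ``the area under $\cat$'' is understood as the area between the curve and its directrix $y=0$ (equivalently, the level $y_{min}-c$ in that parameterization).

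Next I would carry out the two elementary integrals over an arbitrary interval $(x_1,x_2)$. Since $y'(x)=\sinh(x/a)$ and $1+\sinh^{2}t=\cosh^{2}t$, the arc-length element is
\[
ds=\sqrt{1+y'(x)^{2}}\,dx=\cosh(x/a)\,dx=\frac{1}{a}\,y(x)\,dx .
\]
Consequently the area and the length over $(x_1,x_2)$ are related by
\[
\mathcal{A}(x_1,x_2)=\int_{x_1}^{x_2}y(x)\,dx=a\int_{x_1}^{x_2}ds=a\,L(x_1,x_2),
\]
so that $\mathcal{A}(x_1,x_2)/L(x_1,x_2)=a$, independently of $x_1$ and $x_2$. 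Explicitly, both integrals are proportional to $\sinh(x_2/a)-\sinh(x_1/a)$, with constants $a^{2}$ and $a$ respectively, which makes the cancellation transparent and identifies the constant ratio as the catenary parameter $a$ ($=c$).

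The computation itself is routine; the only point that needs care — and the spot where a careless statement of the result would be false — is the choice of reference line for the area. It must be the directrix of the catenary: shifting the reference line vertically by a constant $b$ adds $b\,(x_2-x_1)$ to the area while leaving the arc length unchanged, reintroducing a dependence on the interval. Hence the single thing to pin down before invoking \cite{parker2010property} in our setting is that the pertinent area is measured down to the directrix, after which the invariance is immediate.
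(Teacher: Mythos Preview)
The paper does not actually prove this theorem: it is simply quoted as ``an additional result extracted from \cite{parker2010property}'' and then used as a tool in the proof of Theorem~\ref{teo:only_two_in_common}. So there is no in-paper argument to compare against.

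Your proposal is a correct proof and is in fact the standard one. Writing the catenary as $y(x)=a\cosh(x/a)$ (after a harmless horizontal translation), the identity $1+\sinh^{2}=\cosh^{2}$ gives $ds=\cosh(x/a)\,dx=y(x)\,dx/a$, whence
\[
\mathcal{A}(x_1,x_2)=\int_{x_1}^{x_2}y\,dx=a\int_{x_1}^{x_2}ds=a\,L(x_1,x_2),
\]
so the ratio equals the catenary parameter $a$ (the $c$ of the paper), independent of the interval. Your explicit evaluation --- both integrals proportional to $\sinh(x_2/a)-\sinh(x_1/a)$ with constants $a^{2}$ and $a$ --- makes the cancellation transparent.

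Your caveat about the reference line is the one genuinely important point and is handled correctly: the invariance holds only when ``area under $\cat$'' is measured down to the directrix (the line $y=0$ in the canonical form, i.e.\ $y_{\min}-c$ in the paper's parameterization). Shifting the baseline by $b$ adds $b(x_2-x_1)$ to the numerator without changing the denominator, destroying the invariance. This is exactly the interpretation needed for the subsequent use in Theorem~\ref{teo:only_two_in_common}, since a common vertical shift of both catenaries there leaves the contradiction argument intact.
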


Now we are ready to prove the following statement:
\begin{theorem}\label{teo:only_two_in_common}
Two different catenaries hanging from the same suspension points cannot have more than two points in common.
\end{theorem}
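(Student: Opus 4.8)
The plan is to argue by contradiction and reduce a hypothetical third intersection to a forced coincidence of the two catenaries, using Rolle's theorem together with the explicit form of the catenary derivative. First I would fix the standard parameterization: every catenary can be written as $y_i(x)=a_i\cosh\!\big(\tfrac{x-b_i}{a_i}\big)+d_i$ with $a_i>0$. This is the very family used above for $y_{cat}$: applying the identity $\cosh(2u)=1+2\sinh^2 u$ to $y_{cat}(x)=2c\sinh^2\!\big(\tfrac{x-x_{min}}{2c}\big)+y_{min}$ gives $y_{cat}(x)=c\cosh\!\big(\tfrac{x-x_{min}}{c}\big)+(y_{min}-c)$, i.e. $a=c$, $b=x_{min}$, $d=y_{min}-c$. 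Suppose, for contradiction, that $\cat_1\neq\cat_2$ hang from the same suspension points and share at least three distinct points $p_1,p_2,p_3$, ordered by increasing abscissa $x(p_1)<x(p_2)<x(p_3)$ (there is no vertical-tangency ambiguity, since each catenary is a graph over $x$).

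Next I would pass to the derivative. The difference $h(x)=y_1(x)-y_2(x)$ is smooth and vanishes at $x(p_1),x(p_2),x(p_3)$, so by Rolle's theorem $h'$ vanishes at some $\xi_1\in\big(x(p_1),x(p_2)\big)$ and some $\xi_2\in\big(x(p_2),x(p_3)\big)$; these open intervals are disjoint, so $\xi_1\neq\xi_2$, and $y_1'(\xi_j)=y_2'(\xi_j)$ for $j=1,2$. Since $y_i'(x)=\sinh\!\big(\tfrac{x-b_i}{a_i}\big)$ and $\sinh$ is strictly increasing, this forces $\tfrac{\xi_j-b_1}{a_1}=\tfrac{\xi_j-b_2}{a_2}$ for $j=1,2$. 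Hence the two affine maps $x\mapsto\tfrac{x-b_1}{a_1}$ and $x\mapsto\tfrac{x-b_2}{a_2}$ agree at the two distinct points $\xi_1,\xi_2$, so they coincide; comparing slopes and intercepts yields $a_1=a_2$ and $b_1=b_2$. Then $h(x)=d_1-d_2$ is constant, and since $h$ vanishes at $p_1$ we get $d_1=d_2$, so $\cat_1=\cat_2$ — contradicting $\cat_1\neq\cat_2$.

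The argument is short; the only delicate points are the reduction to the standard form and checking that the three common points have genuinely distinct abscissas, both of which are routine. For completeness I would also note an alternative route that uses the two results proved just above: on the sub-intervals $[x(p_1),x(p_2)]$ and $[x(p_2),x(p_3)]$ the area-to-length ratio of each catenary equals its parameter by Theorem \ref{theo:cool_cat_property}, and when the curves cross transversally at $p_2$, Lemma \ref{le:big_area_small_length} forces that parameter to be strictly larger for $\cat_1$ than for $\cat_2$ on one interval and strictly smaller on the other, a contradiction. This variant, however, needs extra bookkeeping for the tangential sub-case at $p_2$, which the Rolle argument avoids entirely, so I would present the Rolle-based proof as the main one.
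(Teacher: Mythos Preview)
Your proof is correct and takes a genuinely different route from the paper. The paper argues via Lemma~\ref{le:big_area_small_length} and Theorem~\ref{theo:cool_cat_property}: picking three intersection points $a,b,c$, it assumes the sign of $\cat_1-\cat_2$ flips at the middle point $b$, then compares the constant area-to-length ratios $r_1,r_2$ of the two catenaries on the sub-intervals $[x_a,x_b]$ and $[x_b,x_c]$ to derive both $r_1>r_2$ and $r_2>r_1$. This is precisely the alternative you sketch in your final paragraph. Your main argument via Rolle's theorem is more elementary and self-contained: it needs only the explicit derivative $y_i'(x)=\sinh\!\big(\tfrac{x-b_i}{a_i}\big)$ and the injectivity of $\sinh$, bypassing both cited results entirely. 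It also cleanly avoids the sign-flip issue you correctly flag --- the paper's proof takes for granted that $\cat_1-\cat_2$ changes sign at $b$, which is not immediate if $b$ happens to be a point of tangency, whereas your Rolle argument has no such gap.
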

\begin{proof}
    (By contradiction). Let $\cat_1$, $\cat_2$ be two catenaries that hang from $A=(x_1, h_1)$ to $B=(x_2, h_2)$ such that they intersect at at least three points within the interval $[x_1, x_2]$. Let $a=(x_a, h_a)$, $b=(x_b, h_b)$, $c=(x_c, h_c)$ with $x_a<x_b<x_c$ be the intersection points,
    %three consecutive points where $\cat_1$ and $\cat_2$ intersects, 
    and let $\cat_i^a$ and $\cat_i^c$ indicate the arcs of the catenary $\cat_i$ contained in the interval $[x_a, x_b]$ and $[x_b, x_c]$, respectively. Without loss of generality, we assume $\forall x\in (x_a, x_b)$ $\cat_1^a(x)> \cat_2^a(x)$; hence $\forall x\in (x_b, x_c)$ $\cat_1^c(x)< \cat_2^c(x)$. Let $r_1$ ($r_2$) be the area under $\cat_1$ ($\cat_2$) in $[x_a, x_b]$ divided by the length of $\cat_1^a$ ($\cat_2^a$). Let $A_1, A_2, C_1, C_2$ be the area under $\cat_1^a,\cat_2^a, \cat_1^c,\cat_2^c$ respectively, and $L_1, L_2, L_3, L_4$ their lengths. From Theorem \ref{teo:only_two_in_common}, $r_1=\frac{A_1}{L_1}=\frac{C_1}{L_3}$ and $r_2=\frac{A_2}{L_2}=\frac{C_2}{L_4}$. In addition, from Lemma \ref{le:big_area_small_length}, we know that $A_1=A_2+\epsilon_1$ ($\epsilon_1>0$), and $L_1<L_2$; then 
    \begin{equation}
    r_1=\frac{A_2+\epsilon_1}{L_1}>\frac{A_2+\epsilon_1}{L_2}>\frac{A_2}{L_2}=r_2  
    \end{equation}
    \noindent On the other hand, from Lemma \ref{le:big_area_small_length} we know $C_2=C_1+\epsilon_2$ ($\epsilon_2>0$), and $L_3>L_4$; then 
    \begin{equation}
    r_2=\frac{C_1+\epsilon_2}{L_4}>\frac{C_1+\epsilon_2}{L_3}>\frac{C_1}{L_3}=r_1
    \end{equation}
    \noindent From the above we get 
    %$r_1>r_2$ and $r_1<r_2$ which is 
    a contradiction and the result follows. 
\end{proof}

% From \cite{parker2010property} the following theorem can be extracted.
% \begin{theorem}
%     Given a catenary curve $\mathcal{C}$ and any horizontal interval $(t_1, t_2)$, the ratio defined by the area under $\mathcal{C}$ divided by the length of the curve on that interval is independent of the value of $t_1$ and $t_2$.
% \end{theorem}
% This theorem characterize a catenary by its area-length ratio; hence two catenaries with different ratios are different. Using this result, the following theorem can be probed by contradiction.

% \begin{theorem}\label{teo:ivorra2}
% Two different catenaries with suspension point $A$ and $B$ cannot have more than two points in common.
% \end{theorem}

As a direct consequence of Theorem 
% \ref{teo:ivorra2}, 
\ref{teo:only_two_in_common}
%we know that two catenaries with the same suspension points are disjoint and thus, 
we have: 
\begin{corollary}\label{cor:monotony}
%$C_{AB}$ be a catenary of length $l$ connecting the suspension points 
%$A=(0, h_1)$ and $B=(S, h_2)$, $S>0$, be the suspension points and 
Let ${C}_{AB}^{i}(x)$ be the catenary function of length $l_i$ that connects points $A$ and $B$. If $l_i \geq l_j$, $j\neq i$, then ${C}_{AB}^{j}(x) \geq {C}_{AB}^{i}(x)$, for all $x\in [0,S]$.
\end{corollary}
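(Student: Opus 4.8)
The plan is to derive Corollary~\ref{cor:monotony} directly from Theorem~\ref{teo:only_two_in_common} combined with Lemma~\ref{le:big_area_small_length}, with essentially no computation. The two catenaries ${C}_{AB}^{i}$ and ${C}_{AB}^{j}$ both pass through the suspension points $A=(0,h_1)$ and $B=(S,h_2)$, so they already have (at least) these two points in common. By Theorem~\ref{teo:only_two_in_common} they can have \emph{no} further common point, hence ${C}_{AB}^{i}(x)\neq {C}_{AB}^{j}(x)$ for every $x\in(0,S)$. Since both functions are continuous, the difference ${C}_{AB}^{j}-{C}_{AB}^{i}$ never vanishes on the open span and therefore has constant sign there: one of the two catenaries lies strictly above the other throughout $(0,S)$.

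Next I would pin down that sign. First dispose of the degenerate case $l_i=l_j$: a catenary through two fixed points with prescribed length is unique, so in that case the two functions coincide and the (non-strict) inequality holds trivially. Assuming $l_i>l_j$, suppose for contradiction that ${C}_{AB}^{i}(x)>{C}_{AB}^{j}(x)$ on $(0,S)$. Applying Lemma~\ref{le:big_area_small_length} with $\cat_1={C}_{AB}^{i}$ and $\cat_2={C}_{AB}^{j}$ then forces the length of ${C}_{AB}^{i}$ to be strictly \emph{smaller} than that of ${C}_{AB}^{j}$, i.e. $l_i<l_j$, contradicting the hypothesis. Hence ${C}_{AB}^{j}(x)\ge {C}_{AB}^{i}(x)$ on $(0,S)$, and since the two curves agree at $x=0$ and $x=S$, the inequality extends to all of $[0,S]$, which is exactly the claim.

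The only mildly delicate points are (a) invoking existence and uniqueness of a catenary through two points with a given length, which is needed only to settle the equality case cleanly, and (b) applying Lemma~\ref{le:big_area_small_length} in the correct direction — it is the ``the higher curve has the smaller length'' half of that lemma that makes the longer catenary ${C}_{AB}^{i}$ sit below ${C}_{AB}^{j}$. Everything else is a one-line topological argument about a sign-constant continuous function on $(0,S)$, so I expect the final write-up to be only three or four lines once Theorem~\ref{teo:only_two_in_common} and Lemma~\ref{le:big_area_small_length} are in hand.
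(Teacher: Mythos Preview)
Your proposal is correct and follows the paper's intended approach: the paper states the corollary as ``a direct consequence of Theorem~\ref{teo:only_two_in_common}'' without further proof, and your argument is precisely the natural way to flesh that out. Your explicit appeal to Lemma~\ref{le:big_area_small_length} to fix the direction of the inequality (longer catenary sits below) is a detail the paper leaves implicit, but it is the right ingredient and your handling of it is clean.
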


%\begin{proof}
%The proof is straightforward using Theorem \ref{teo:ivorra2}.
%\end{proof}

The monotonicity property of Corollary \ref{cor:monotony} allows us to implement a bisection method within the allowable length interval of the tether, namely $[d(A, B), L]$, to identify an appropriate catenary setting.
%Using this monotonicity property, a bisection method can be implemented on the feasible length interval of the tether, that is, $[d(A,B),L]$, to identify a suitable catenary fit. 
As demonstrated in our experiments, this method proves to be efficient, delivering accurate approximations with minimal iterations. Algorithm \ref{alg:catenaryaproximation} depicts the pseudocode of the solution.

\begin{algorithm}[t!]
\caption{Algorithm to compute a fitting catenary.}
\label{alg:catenaryaproximation}
\begin{algorithmic}[1]
\Require \emph{point} $A$, \emph{point} $B$, \emph{parabola} ${P_{AB}}$, \emph{maximum length} $L$, \emph{approximation error} $\varepsilon$
\Ensure $\mathcal{C}_{AB}:$ a fitting catenary hanging from $A$ and $B$.
\State $L_0 \gets |\overline{AB}|$
\State $L_1 \gets L$
\While{$True$}
\State $L_m \gets (L_0+L_1)/2$
\State $\mathcal{C}_{AB} \gets get\_{cat}atenary(A,B,L_m)$
\If{$L_1-L_0 \leq \varepsilon$}
    \Return $\mathcal{C}_{AB}$
\EndIf
\State $x_{max} \gets get\_max\_distance\_axis({P_{AB}},\mathcal{C}_{AB}, A, B)$
\State $d \gets {P_{AB}}(x_{max}) - \mathcal{C}_{AB}(x_{max})$
\If{$d < 0$}
    \State $L_0 \gets L_m$
\Else{}
    \State $L_1 \gets L_m$
\EndIf

\EndWhile
\end{algorithmic}
\end{algorithm}

In Algorithm \ref{alg:catenaryaproximation}, the suspension points $A=(0, h_1)$ and $B=(S, h_2)$ and the maximum length of the expected catenary, $L$, are taken as input. In addition, an approximation error $\varepsilon$ is received to determine the accuracy of the output. The metric used in the bisection is the difference in terms of lengths between two iterations. First, the minimum and maximum lengths of the expected catenary are defined as $L_0$ and $L_1$, respectively, lines 1 and 2. These values are obtained from the parameters $A$, $B$, and $L$ and define a feasible interval of catenary lengths. Later, the algorithm performs a loop to estimate the optimal catenary length in the range $[L_0, L_1]$, lines 3-15. In each iteration, the mean length between $L_0$ and $L_1$, $L_m$, is calculated, obtaining the catenary of length $L_m$, $\mathcal{C}_{AB}$, lines 4 and 5. In the next step, the value: 
\begin{equation}
 x_{max} = max_{x \in [0, S]}|{P_{AB}}(x) - \mathcal{C}_{AB}(x)|   
\end{equation}
\noindent is calculated using the function $get\_max\_distance\_axis$ on line 8, and the maximum vertical difference $d$ between ${P_{AB}}$ and $\mathcal{C}$ is obtained on line 9. The value $x_{max}$ can be obtained by solving a numerical problem or can be approximated by sampling the interval $[0, S]$. Using Corollary \ref{cor:monotony}, the sign of the difference $d$ is the condition to discard an entire subinterval, $[L_0,L_m]$ or $[L_m,L_1]$.
The stop criterion is outlined at line 6, and the algorithm returns the last calculated catenary $\mathcal{C}_{AB}$. 
%The number $N$ of iterations is: $$ N = \log _2 (L - |\overline{AB}| - \varepsilon)$$
%Another stop criterion could be to define a maximum tolerance $\theta$ and return the first catenary with maximum vertical distance to the parabola smaller than $\theta$. In this case, the value $\theta$ should be carefully defined for the correct convergence of the bisection procedure.

\subsubsection{Method 3. Fitting the catenary by sampling the parabola points}
\label{sec:bypoints}

This method consists of fitting the catenary to the parabola through a nonlinear optimization process to minimize the distance to $n$ points sampled on the catenary and the parabola. For this, the $n$ points of the parabola are considered in a plane X-Y. The nonlinear optimizer creates $n$ constraints that must minimize the Y-distance between both curves, parabola and catenary. This translates into the following optimization problem.

\begin{equation}
     cat(x)^*=\arg \min_{cat} \sum_{i=1}^n ||par(x_i) - cat(x_i)||^2
\end{equation}

%\begin{equation}
%    \label{eq:const_byPoints}
%    R_i = y_c(x_i) - y_p(x_i)  \quad \text{for} \quad i = 1, 2, \dots, n
%\end{equation}

\noindent where $cat(x)=a\ cosh( (x-x_0)/a) + y_0$ represents the catenary equation in which the parameters $a$, $x_0$ and $y_0$ must be estimated, and $par(x)$ is the parabola equation to fit.

\subsubsection{Benchmarking}

Now, the methods for computing an approximation catenary are compared each other. The methods of Sections \ref{sec:bylength}, \ref{sec:approxfitting}, and \ref{sec:bypoints} are denoted by \emph{ByLength}, \emph{ByFitting} and \emph{BySampling}, respectively.
%The first method, denoted as \emph{ByLength}, consists of taking a catenary of the same length as the parabola (see Section \ref{sec:approxlength}). Alternatively, a fitting algorithm is proposed to approximate the optimal catenary that minimizes the maximum vertical distance between the catenary and the parabola, \emph{ByFitting} (see Section \ref{sec:approxfitting}). Fig. \ref{fig:approx_cat} shows the catenaries obtained with both techniques for a given parabola. 
A set of random experiments was performed to compare the methods in terms of the maximum vertical distance between the reference and the approximated curves. In the experiments, the approximation error $\varepsilon$ selected for the \emph{ByFitting}, as appears in Algorithm \ref{alg:catenaryaproximation}, was $10^{-2}$, and the number $n$ of points in \emph{BySampling} was 5.

\begin{comment}
\begin{figure}[t!]
    \centering
    \includegraphics[width=0.45\textwidth]{}
    \caption{Approximation catenaries by the three methods.
    }
    \label{fig:approx_cat}
\end{figure}
\end{comment}

A set of 100 experimental scenarios was generated. For each scenario, the suspension point $A$ is set to $(0,1)$, while both the suspension point $B$ $(B_x,B_y)$ and the parabola's longitude are generated randomly. In accordance with the intended application, the parabolic curve is generated in such a way that the point $B$ is positioned higher than the point $A$. The maximum allowed curve length $L$ is set to $30$ meters.

Two metrics within the interval $[0,B_x]$ are taken into account to evaluate the performance of the approximation methods:
\begin{itemize}
%     \item [1.] %Maximum Vertical Distance (
% $\delta_{max}$: The maximum vertical distance between both curves. %between the suspension points, $x\in[0,S]$.  
%     \item [2.] %Total Vertical Distance (
% $\delta_{Total}$: The sum of the vertical distances. 
% %for all $x$ in $[0,S]$. 
%     \item [3.] %Mean Vertical Distance (
% $\delta_{mean}$: The mean vertical distance. %in the interval $[0,S]$.  
    % \item [4.] %Computation Time (
    % $C.T.$: The total computational time needed to compute the approximation curve. 
        \item [1.] $\epsilon_{mean}$: The mean vertical distance in the interval $[0,B_x]$ between both curves, catenary and parabola.  
        \item [2.] $\epsilon_{mean}/L$:  Is the mean vertical distance over the length of the curve in the interval $[0,B_x]$.
\end{itemize}

% $\delta_{max}$, $\delta_{Total}$ and $\delta_{mean}$ are calculated by sampling the interval. For each scenario, the four metrics are evaluated in the approximation methods. Table \ref{tab:comparison} shows the mean and standard deviation of the results of all experimental scenarios. }%Table \ref{tab:comparison} shows that 
% \emph{ByFitting} presents the best performance. %between all methods in terms of vertical distance. 
% %\emph{ByFitting} presents the best performance between the methods in terms of vertical distance, that means, catenary and parabola curves are similar.
% This method yielded precise results, achieving less than $1.1~m$ of the maximum vertical distance, less than $3~m$ of the total vertical distance, and less than $0.7~m$ of the mean vertical distance. Furthermore, both methods are efficient and can be used on-line.

%$\delta_{mean}$ is calculated by sampling the interval. For each scenario, the metrics are evaluated in the approximation methods. 

\noindent Table \ref{tab:comparison} shows the mean and standard deviation of the results of all experimental scenarios. It can be seen that \emph{BySampling} presents the best performance among the methods in terms of vertical distance, which means that the catenary and parabola curves are similar.

%\revS{This method yielded precise results, achieving less than $0.6~m$ of the mean vertical distance. Furthermore, all the methods are efficient and can be used on-line.}

% \begin{table}[ht]
% \begin{center}
%   \small
%   \scalebox{0.9}{
% \begin{tabular}{|c|c|c|c|c|} 
%  \hline
%  Method & $\delta_{max}~(m)$ & $\delta_{Total}~(m)$ & $\delta_{mean}~(m)$ & $C.T.~(s)$ \\ [0.5ex]
%  \hline
% % \emph{ByPSeries} & $3.69 \pm 6.41$ & 38.39 & $2.47 \pm 4.59$ & $0.00\pm0.00$ \\ 
% % \hline
%  \emph{ByLength} & $0.69\pm1.08$ & 5.12& $0.29\pm0.42$ & $0.00\pm0.00$ \\
%  \hline
%  \emph{ByFitting} & $0.48 \pm 0.58$ & 2.98 & $0.28 \pm 0.37$ & $0.07 \pm 0.01$ 
%  \\
%  \hline
% \end{tabular}
% }
% \end{center}
% \caption{
% Comparison between the approximation methods.
% %using metrics $\delta_T$, $\delta_{max}$, $\delta_{mean}$, and $C.T$ 
% For each metric, the mean and standard deviation are shown.}
% \label{tab:comparison1}
% \end{table}

\begin{table}[t!]
\caption{
Comparison between the approximation methods: by Length, by Fitting and by Points
%using metrics $\delta_T$, $\delta_{max}$, $\delta_{mean}$, and $C.T$ 
For each metric, the mean and standard deviation are shown.}

\begin{center}
  \small
  \scalebox{0.9}{
\begin{tabular}{|c|c|c|} 
 \hline
 \textbf{Method} & $\epsilon_{mean}~[m]$ & $\epsilon_{mean} / L$  \\ [0.8ex]
 \hline
% \emph{ByPSeries} & $3.69 \pm 6.41$ & 38.39 & $2.47 \pm 4.59$ & $0.00\pm0.00$ \\ 
% \hline
 \emph{ByLength} & $0.3187\pm 0.3299$  & $0.0200\pm 0.0214$ \\
 \hline
 \emph{ByFitting} & $0.3127 \pm 0.3108$ & $0.0194 \pm 0.0200$ 
 \\
 \hline
  \emph{BySampling} & $0.2841 \pm 0.3040$ & $0.0173 \pm 0.0178$ 
 \\
 \hline
\end{tabular}
}
\end{center}
\label{tab:comparison}
\end{table}

%shows great performance, taking less than $0.1$ seconds to compute the approximation curve.

%The advantage of the $A.C$ and the $A.L$ methods over $O.C.A$ is the very low computational time needed to find the approximate catenary (taking just a few milliseconds). The only reason to pick another method other than $O.C.A$ is that the required computational speed is extremely fast, in this case, the right method to select is $A.L$. Method $A.L$ takes almost zero time to compute and shows acceptable results in terms of vertical distance, reaching less than $1.8~m$ of the maximum vertical distance, less than $5.2~m$ of the total vertical distance, and less than $0.8~m$ of the mean vertical distance. Finally, the $A.C$ method shows poor results despite its fast performance. 

\subsection{Ensuring a collision-free catenary}

Although the proposed catenary approximations of the parabola are accurate enough, we need to be sure that the computed catenaries are feasible, that is, collision-free and with admissible length.

Ensuring a feasible tether length after the catenary approximation can be easily implemented by constraining the parabola length during optimization. We can set the minimal tether length a $5\%$ longer and the maximum length a $5\%$ shorter, ensuring that the computed catenary will still hold the length constraint even with the distortions applied by the approximation.

However, ensuring a collision-free catenary is more complex. The most usual approach to solve this problem consists in inflating the obstacles, as in Algorithm \ref{alg:decision_problem}. The computed parabola avoids obstacles and passes through a vertex of an obstacle inside the trapezoidal region $T$. To guarantee that the calculated approximate catenary remains collision-free, the obstacles should be initially expanded as follows. For each polygon, we consider the \emph{expanded} polygon, which is defined by lines parallel to the faces of the polygon and located at a distance of $\tau$ (see Fig. \ref{fig:PDP}). The tangent collision-free parabola is built in the Algorithm \ref{alg:decision_problem} taking as input the expanded polygons. In practice, it is crucial to determine the tolerance $\tau$ for both ground and aerial obstacles. Large values of $\tau$ can give negative responses to the problem PDP, while small values of $\tau$ can lead to non-feasible approximation catenaries.  The election of $\tau$ strongly depends on the approximation method used. The more accurate the method, the better the selection of $\tau$. According to Table \ref{tab:comparison}, the most accurate approximation is \emph{BySampling} with a relative error of $0.0173(\sigma:0.0178)$ per meter. We can use such error to estimate the value of $\tau$, setting it to the average approximation error plus its standard deviation is a good trade-off, particularly $\tau=0.035*L$, where $L$ is the longitude of the tether. Note that the mean vertical distance between the parabola and the catenary never exceeded this value in the experiments. This means that both the ground and the obstacles in $\mathcal{O}_{\tau}$ should differ only in $0.6$ meters in the vertical axis from their original shape. Therefore, using \emph{BySampling} and $\tau=0.6~m$, we can ensure that Algorithm \ref{alg:catenaryaproximation} solves the CDP problem and computes a collision-free catenary for the marsupial robotic system.

While the previous method is safe and effective, obstacle inflation suffers from over-constraining the planning problem, eliminating possible feasible solutions under the assumption of the worst-case scenario. Instead, we propose not using obstacle inflation, but reevaluating the solution (now with catenary) to check if it is still collision-free after fitting the parabola. In case the solution is not feasible, we can slightly adjust the length of the tether so that it is collision-free again.

%----------------------------------------------------------
\section{Efficient Path Planning Approach for a tethered UGV-UAV system}
\label{sec:path_planning}
% %---------------------------------------------------------------------------------------
%\section{Decision problem applying  to path and trajectory planning of thetered UAV-UGV}

In the previous section, we defined a procedure that allows us to efficiently identify the status of a collision-free catenary using a decision problem. In this section, we use this procedure \rev{to enhance the path planning method for a tethered UAV-UGV robotic configuration proposed in \cite{smartinezr2023}. } We summarize the method here for the sake of completeness.

%------------------------------------
%\subsection{RRT* algorithm using decision problems}

The goal of the path planning algorithm is to devise a safe path for the marsupial system that connects a starting position of the whole system to a goal configuration in which the UAV system has a goal position, while the rest of the system has an arbitrary, but feasible and collision-free, configuration. To this end, we use the RRT* algorithm \cite{karaman_rrt_star}.

To reduce the complexity of the approach, we only consider as decision variables the position of both platforms, omitting the variables related to the state of the tether. %, which are specified in Section \ref{sec:overview}. 
These tether parameters are then obtained by solving the corresponding PDP or CDP associated with the positions of both platforms.

The path planner in this Section does not consider any kinematic or dynamic constraints to obtain the path, leaving them for the optimization stage of the algorithm described in Section \ref{sec:optimization_process}.

% \begin{figure}[t!]
% \centering
% \includegraphics[width=0.4\textwidth]{}
% %  \includegraphics[width=0.48\textwidth]{}
%   \caption{Basic flow diagram of our implemented RRT* algorithm. }
%   \label{fig:rrt_diagram}
% \end{figure}

The main steps of the RRT* algorithm are as follows. % in Fig. \ref{fig:rrt_diagram}. 
It creates a tree starting from the initial configuration of the marsupial system which will be expanded in a loop for a given number of iterations. In the loop, it generates a new collision-free sample in the \textit{Sampling} step, generating a random collision-free node ($x_{random}$) containing the positions of the UAV and the UGV. Then, RRT* tries to extend the tree towards the random sample, obtaining a new candidate node to be added to the tree ($x_{new}$). The new node should be \textit{validated} by solving the Decision Problem  of section \ref{sec:approach} connecting the new UAV and UGV poses. If extended, RRT* optimizes the graph by searching for the best parent node and rewiring the tree in the neighborhood of $x_{new}$. These steps lead to an asymptotically optimal solution for the path \cite{karaman_rrt_star}. We explain the main particularities of our implementation in each step below.

\subsection{Sampling}

In this step, we search for valid UGV and UAV positions. To this end, the position of the UGV is sampled at the traversable points of the 3D map with a minimum clearance \cite{driving_pc}. Similarly, the position of the UAV is sampled in the obstacle-free space of the environment, which is composed of points having a minimum clearance. We use Euclidean Distance Fields (EDF) to speed up the sampling process, storing it in a preprocessed grid, which contains the distance from each grid point to its closest obstacle in the environment \cite{edf_survey}. In this way, we can check if a point is collision-free just by checking its EDF.

%Obtaining the projection of the 3D point cloud of obstacles in the environment to a plane $\pi$ can be computationally demanding. Hence, in this paper we propose to sample the planes on the environment in a preprocessing step. This is done by slicing the 3D environment in a given number of headings (see Fig. \ref{fig:slices}). Then, whenever a PDP is to be solved, we use the closest pre-processed plane of the required heading.

%\begin{figure}[!t]
%\includegraphics[width=0.48\textwidth]{}
%  \includegraphics[width=0.48\textwidth]{Figures/results_rrt_2.png}
%  \caption{Slices of a 3D scenario in a given heading. The obstacles of each plane are represented in small spheres with different colors for different planes. }
%  \label{fig:slices}
%\end{figure}

%The performance of both approaches (PDP and CDP) is tested and compared in Section \ref{sec:experiments}. 

\subsection{Steer}

\label{sec:steering}

%We use the \texttt{Steering} procedure used in \cite{smartinezr2023} 
This step tries to generate a new valid node ($x_{new}$) by extending the tree from the nearest node ($x_{near}$) of the previously generated  configuration ($x_{random}$). We follow the steering function proposed in \cite{smartinezr2023}, which sequentially tries three different steering modes, keeping the new node of the first successful mode. Note that in contrast to the Sampling step, the new node should be validated before adding it to the tree. In each mode, the steering method considers movement in a subset of the agents:

\begin{enumerate}
    \item The first mode steers only the UAV position component, and the UGV position is fixed. 
    \item In the event that the first mode does not succeed, the second mode is executed. The second mode steers both the UGV and the UAV
    \item The last mode just steers the UGV and considers the UAV fixed.  
\end{enumerate}

\subsection{Validating the new node}

Whenever a candidate $x_{new}$ configuration is generated with one of the steering alternatives, we make sure that there exists a collision-free tether connecting the positions of each vehicle. To this end, a decision problem, either CDP or PDP, should be solved. 

%\textcolor{red}{The algorithm in \cite{smartinezr2023}, used the catenary model of the shape of the tether. Therefore, it solved a CDP with the numerical approach described in Section \ref{sec:approach}. }

%In this paper, we check for collisions in the catenary by sampling it and ensuring that the EDF values on the samples exceed a minimum clearance. 

In this paper, we propose to use a parabolic model of the tether to speed up the RRT* algorithm. Therefore, we solve a PDP using our iterative algorithm of Section \ref{solvingPDP}. Note that the proposed iterative algorithm uses a 2D description of the environment and obstacle clustering to accelerate the processing, while the RRT* is working in 3D. In the rest of the section, we detail our proposed procedure that conveniently projects and clusters the obstacles in the environment to a 2D plane, making use of the input EDF.

%As in the catenary formulation
%As stated before, we use the EDF to accelerate the 2D projection and clustering process. 
Once the positions of the UGV and UAV have been defined, we compute the 2D vertical plane that passes through both positions. Then, we uniformly sample the 3D EDF in the T region 
(see Fig. \ref{fig:PDP}). If the EDF at a sample point does not meet a minimum clearance, we assume that the point is an obstacle. Otherwise, it is considered as free space. We use the same resolution in this sampling procedure as the resolution of the provided EDF.

While we are generating the grid containing the obstacles in the 2D plane, we try to group them by connecting the  currently generated point with the previous obstacles in its already computed neighborhood, if any. This method is fast, can be performed just by analyzing the direct neighbors of each sample, but might divide concave obstacles into several clusters depending on their shapes. As an example, Figure \ref{fig:clustering} represents a projection that has been properly clustered into three main obstacles (red, green, and blue). However, the clustering algorithm fails to merge the remaining obstacles in pink, purple, and yellow, to name a few. Therefore, the computation of the PDP will increase slightly due to a larger number of clusters. In spite of it, results of Section \ref{sec:experiments} show that this increase is marginal and that PDP remains the fastest method to solve the problem. 

\rev{Once we have obtained the different clusters as point clouds, we get the convex hull of each cluster, resulting in the input polygons of Algorithm \ref{alg:decision_problem}.}

 %Finally, if no $x_{new}$ node is found, we go back to the sampling procedure. In contrast, $x_{new}$ is first connected to the best possible neighbor, and then the tree is rewired to ensure asymptotical optimality.

\begin{figure}[!t]
  \includegraphics[width=0.48\textwidth]{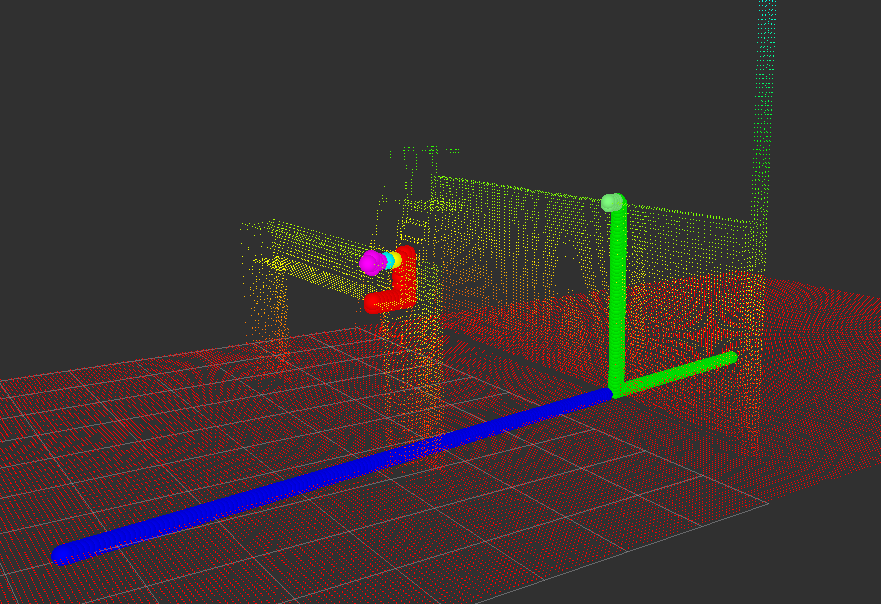}
  \caption{Example of a 2D projection, in which each cluster is represented with a set of big dots in a different color. The projection is obtained from a 3D Point Cloud, represented in fine-grained dots with colors from red to green depending on its $z$ coordinate. Figure obtained with RViz \cite{rviz}. }
  \label{fig:clustering}
\end{figure}

%----------------------------------------------------------
\section{Trajectory optimization}
\label{sec:optimization_process}
Once an initial feasible path is computed, we proceed to optimize the full robot system trajectory as a whole, considering dynamic constraints such as velocities or accelerations, and distance to obstacles. 

In this regard, this paper follows the methodology presented in \cite{smartinezr2023}. Unlike \cite{smartinezr2023}, instead of considering the length of the tether in the state vector, here we consider the set of parameters that actually define the tether curve. For both the catenary and parabola, the number of required parameters is three, so we increase the dimension of the state vector in two.  % with respect to \cite{smartinezr2023}. 
However, this new state vector improves the convergence and decreases the computation to obtain a feasible solution. Including the curve parameters in the state vector (instead of just the length) enables the optimizer to directly sample the tether and properly compute the gradients with respect to the curve itself. Section \ref{sec:experiments} will show that this new state vector reduces the optimization time by at least one order of magnitude.

%Thus, to define the state of the trajectory, we must incorporate the time variable $\Delta t^{i}$ into the path obtained from the path planner. For consistency, the UGV, UAV, and tether trajectories must use the same $\Delta t^{i}$ to coordinate the arrival of both platforms at the planned waypoint. 

Thus, in this paper, the state of the system in each time step includes the position of the UGV $\mathbf{p}_g=(x_g,y_g,z_g)^{T}$, the position of the UAV $\mathbf{p}_a=(x_a,y_a,z_a)^{T}$, the parameters that define the curve of the tether $\mathbf{T}$, and the time variable $t$. For consistency, the UGV, UAV, and tether trajectories must use the same $t$ to coordinate the arrival of both platforms at the planned waypoint. 

We discretize the trajectories so that the states of our problem become the set:
\begin{equation}
\label{eq:traj_params}
%    \Omega = \{\mathbf{p}^i_g,\mathbf{p}^i_a,p^{i}, q^{i}, r^{i},\Delta t^i\}_{i=1,...,n}
\Omega = \{\mathbf{p}^i_g,\mathbf{p}^i_a,\mathbf{T}^i,\Delta t^i\}_{i=1,...,n}
\end{equation}

\noindent where $\Delta t^{i} = t^{i} - t^{i-1}$ is the time increase between states $i$ and $i-1$, allowing temporal aspects. When needed, we also discretize the resultant tether model into a set of $m$ positions $\mathbf{p}_{t}=(x_{t}, y_{t}, z_{t})$:

\begin{equation}
\label{eq:tether}
    P^i = \{\mathbf{p}^j_t\}_{j=0,...,m-1}
\end{equation}

Our problem consists of determining the values of the variables in $\Omega$ (\ref{eq:traj_params}) that optimize a weighted multi-objective function $f(\Omega)$:  

\begin{equation}
\label{eq:cost_function}
     \Omega^* = \arg \min_\Omega f(\Omega) = \arg \min_\Omega  \sum_{i,k} \gamma_k * || \delta_k^i(\Omega) ||^2
\end{equation}

\noindent where $\Omega^*$ denotes the optimized collision-free trajectory for the UAV, the UGV, and the tether from the starting to the goal configurations. $\gamma_k$ is the weight for each component $\delta_k(\Omega)$ (also known as residual) of the objective function.  
%As in \cite{smartinezr2023},
 Each component encodes a different constraint or optimization objective for our problem and should be evaluated in all the time steps $i$.
In addition, the optimization problem is addressed using nonlinear sparse optimization algorithms, with \emph{Ceres-Solver} \cite{ceres-solver} serving as the optimization back-end.

%In this regard, this paper follows the same In this section, we detail the enhancements made to the optimization process of the method for finding feasible trajectories as described in \cite{smartinezr2023}. For the whole process, we consider a solution feasible when the trajectory is collision free for every agent of the system, including the tether. 

%As previously mentioned, the core of this improvement lies in the utilization of parabola parameterization to represent the tether. Unlike the original method, which focused on optimizing the length of the catenary curve, our approach leverages the curve parameters for optimization. 

% %---------------------------------------------------------------------------------------
\subsection{Tether parameterization}
\label{sec:tether_params}

We propose two different curves to model the tether: the catenary and the parabola. Both are parameterized by three variables:
\begin{eqnarray}
    parabola(x) &=& px^2+qx+r \\
    catenary(x) &=& a\ cosh( (x-x_0)/a) + y_0 
\end{eqnarray}

\noindent Thus, the parameters of $\mathbf{T}$ in (\ref{eq:traj_params}) will be defined as \mbox{$\mathbf{T}=(a, x_0, y_0)^T$} for the catenary curve, and $\mathbf{T}=(p,q,r)^T$ for the parabola.

%We will see in Section \ref{sec:experiments} that, depending on the type of curve selected for tether parameterization, both the feasibility and the required computation are improved. 
%Although using the catenary as the dentition of the curve for the tether can be easily derived from the optimization process presented in \cite{smartinezr2023}, 
%Note that the path planner gives us a path consisting of a sequence of UAV and UGV poses with catenary lengths. Therefore, we would need to calculate initial parameters for
%the parabola curves so that we can use the parabola approximation. 
%The next section describes the process of initializing the tether for the parabola.

% %---------------------------------------------------------------------------------------
\subsection{Initialization}
\label{sec:mathappoach}

% described in the previous section is a free collision path for the whole agents. This path presents a 

The variables in the optimization (positions and tether states) are initialized from the feasible path provided by the RRT* planner of Section \ref{sec:path_planning}. This initial path consists of a sequence of UAV and UGV positions with catenary lengths at each position. To initialize each $\mathbf{T}_i$ we need the parabola that best fits such a catenary.

%The output of the decision problem between two suspension points \emph{A} and \emph{B} is a free collision catenary of length \emph{L}. The length is part of the feasible solution path computed by the RRT* planner. We need the parabola that best fits such a catenary.

The traditional approach \cite{9337759} to find the parabola parameters that best fit a catenary involves assigning the length of the catenary $L_C$ to the length of the parabola. Thus, the problem of computing the parameters of the parabola is reduced to computing the parameters of the curve passing through two suspension points, \emph{A} and \emph{B}, with a given length $L_C$. This can be obtained by solving the following nonlinear system of equations:

% \begin{eqnarray}
%   \label{eq:parabola_2}
%   \begin{array}{ccl}
%   y_A = p{x_A^2} + qx_A + r & \\
%   y_B = p{x_B^2} + qx_B + r & \\
%   L =  \int_{A}^{B} \sqrt{1+(\mathbf{P'(x)})^2}  \,dx =  \int_{A}^{B} \sqrt{1+(2px + q)^2}  \,dx  \\
%   \end{array}
% \end{eqnarray}

\begin{equation}
    \label{eq:parabola_2}
    \left.
        \begin{array}{cc}
            y_A = p{x_A^2} + qx_A + r & \\
            y_B = p{x_B^2} + qx_B + r & \\
            %\vphantom{\int_{A}^{B}} L=\int_{A}^{B}\sqrt{1+(\mathbf{P'(x)})^2}  \,dx=\int_{A}^{B}\sqrt{1+(2px+q)^2}\,dx
            \vphantom{\int_{A}^{B}} L_C=\int_{A}^{B}\sqrt{1+(2px+q)^2}\,dx
        \end{array}
        \right\}
\end{equation}

% We use this solution as input for our optimizer, so, we consider the catenary length to calculate the parabola length and optimize its parameters. Thus, the result of the optimization process is a parabola with length \emph{L}*, and the optimal catenary is obtained between \emph{A} and \emph{B} using \emph{L}*. The flow of the algorithm in the optimizer can be illustrated as follows:

% \begin{equation}
% \label{eq:parabola_1}
%     A, B , \mathbf{C}(L) 	\rightarrow  \mathbf{P}(L)  \rightarrow Optimization \rightarrow \mathbf{P}(L^{*}) \rightarrow \mathbf{C}(L^{*}),
% \end{equation}

% where $\mathbf{P}$(\emph{L}) is the parabola of length \emph{L} and $\mathbf{C}$(\emph{L}*) is the catenary of optimal length. 
% The problem of computing the parabola $\mathbf{P(x)}=px^2+qx + r$ passing through two suspension points $A$ and $B$ and length $L$ can be numerically solved using the system:

\noindent Notice that this integral can be solved analytically, but its solution is a complex equation establishing highly nonlinear relations among the parabola parameters. This makes the solution sensitive to parameter initialization.

To avoid nonlinear calculation, we propose an approach consisting of equalizing the area under the parabola between the segment connecting \emph{A} and \emph{B} with the area under the catenary curve, $a_C$. Using the parabola equation $px^2+qx+r$, we define a system of equations to obtain the parabola parameters \emph{p}, \emph{q} and \emph{r} in 2D (as a plane) as follows:

%Now, we assume that the output of the planning are \emph{A}, \emph{B} and the area \emph{a$\mathbf{_C}$} under the catenary (which is easy to calculate and approximate). 

% From \emph{A} and \emph{B} using \emph{L} is easy to calculate the area \emph{a$\mathbf{_C}$} under the catenary. From this, we compute parabola area \emph{a$\mathbf{_P}$}. Considering the parabola hanging from \emph{A} and \emph{B} and area \emph{a}, and using the parabola equation ($px^2+qx+r$) we define a equation system to obtain the parabola parameters \emph{p}, \emph{q} and \emph{r} in 2D as follows:

% \begin{eqnarray}
%   \label{eq:parabola_4}
%   \begin{array}{ll}
%   y_A = p{x^2}_A + qx_A + r & \\
%   y_B = p{x^2}_B + qx_B + r & \\
%   a =  \int_{A}^{B} (px^2 + qx + r)  \,dx  &
%   \end{array}
% \end{eqnarray}

\begin{equation}
    \label{eq:parabola_4}
    \left.
        \begin{array}{lll}
            y_A = p{x^2}_A + qx_A + r & \\
            y_B = p{x^2}_B + qx_B + r & \\
             a_C =  \int_{A}^{B} (px^2 + qx + r)  \,dx       
        \end{array}
        \right\}
\end{equation}

\noindent This is a relatively simple linear system of equations for $p,\ q,\ r$ and $y$, that can be solved efficiently. We will use this method to compute the best-fitting parabola.

The computed parabola parameters are used as part of the initial solution for the optimization process. Through this optimization, we obtain the optimized parabola parameters and then compute the corresponding catenary by means of the Algorithm \ref{alg:catenaryaproximation}. 

\subsection{Optimization process}
\label{sec:implementation}
% The optimizer considers geometric constraints such as obstacle distance, trajectory smoothness, and equi-distance between states. Moreover, it also considers temporal constraints including time, velocity, and acceleration. Temporal considerations apply to both platforms, UGV and UAV. These constraints are described and used in our previous work \cite{smartinezr2023}, where we optimized the mentioned constrained related to UGV, UAV, and tether. We represent the constraints in the problem as penalty costs in the objective function.

% About the tether constraints, we present a new approach compared to our previous work in \cite{smartinezr2023} and \cite{smartinezr2021}, where were consider the constraints related to obstacle distance and length for a catenary curve. Furthermore, these constraints were addressed by the optimizer as an AutoDiff Functor which solves the numerical method of Bisection to compute the transcendental equation for mechanical catenary. The new approach incorporates three constraints centered around a parabola curve. These constraints involve obstacle distances, parabola length, and parabola parameters, which force the parabola to pass through the desired endpoints. Next, we define the constraints included in the objective function to be minimized, which also are addressed as AutoDiff Functor but solving this time an analytic equation.

The optimizer takes into account geometric constraints such as obstacle avoidance, trajectory smoothness, and equi-distance between states. In addition, it considers temporal constraints including time, velocity, and acceleration, applicable to both the UGV and the UAV platforms. %These constraints, which we previously described and utilized in \cite{smartinezr2023}
The main set of constraints is inherited from the optimization method detailed in \cite{smartinezr2023}. In particular, we use the penalty functions related to the UAV, the UGV and the tether feasibility. However, we redefine the maximum tether length constraint and we include a new residual that ensures that the tether passes through the positions of both the UAV and UGV due to the new parametric formulation.

%They involve optimization related to UGVs, UAVs, and tether. We represent the constraints in the problem as penalty costs in the objective function (\ref{eq:tether}).

%Regarding the tether constraints, we introduce a new approach compared to our previous works \cite{smartinezr2023} and \cite{smartinezr2021}, where we considered constraints related to obstacle distance and length for a catenary curve. In these works, the optimizer addressed these constraints as an AutoDiff Functor that solved the numerical method of Bisection to compute the transcendental equation for the mechanical catenary.

%The new approach incorporates three constraints centered around a parabola curve. These constraints involve obstacle distances, parabola length, and parabola parameters, ensuring the parabola passes through the desired endpoints. We define these constraints in the objective function to be minimized, which are also addressed as AutoDiff Functors, but this time solving an analytic equation. 

Next, we present the new constraints required to consider the direct tether parameterization presented in (\ref{eq:traj_params}) for parabola and catenary:

%\subsubsection{Obstacle avoidance constraint}

%The tether state is computed by solving the parabola equation for parameters $p^{i}$, $q^{i}$ and $r^{i}$ between \emph{A} = $\mathbf{p}^i_g$  and \textbf{B} = $\mathbf{p}^i_a$. Using these values a parable is computing and sampling according to (\ref{eq:tether}) into $m$ points.

%This constraint penalizes the proximity of the sampled tether to obstacles. We compute the distance $d^i_{ot,j}$ to the nearest obstacles of every sample of tether $p^{i}$, $q^{i}$, $r^{i}$, and estimate the residual $\delta^i_{op}$ as the sum of the inverse nearest distances. We increase the weight of those samples closer than a safety distance $\rho_{ot}$ to guarantee higher cost in these cases using $\rho_{j}= \beta$, with $\beta >> 1$.

%\begin{eqnarray}
%  \label{eq:eq_teher_obst}
%  \delta^i_{op} &=& \sum_{j=0}^{m-1} \frac{\rho_{j}}{d^i_{op,j}}  , \  \rho_{j} = \left \{
%      \begin{array}{cc}
%      1 & ,if\  \  d^i_{op,j}  >  \rho_{op}\ \\
%      \beta & ,otherwise
%  \end{array}
%  \right .
%\end{eqnarray}

\subsubsection{Length constraint}
This constraint penalizes unfeasible tether lengths for the catenary and parabola. The tether cannot be shorter than the Euclidean distance between the UGV and the UAV $d^i_u=||\mathbf{p}^i_g-\mathbf{p}^i_a||$ and cannot exceed its maximum length $L_{max}$. Given the tether parameters $\mathbf{T}^i$ (parabola or catenary), we can analytically compute the length of the curve between the suspension points $l^i$. With this information, we define the following residual:
\begin{equation}
    \label{eq:eq_length}
    \delta^i_{up} =   e^{d^i_{u} - l^{i}} + e^{l^{i} - L_{max}} 
\end{equation}
%\begin{eqnarray}
%  \label{eq:eq_length}
%  \delta^i_{up} =& \left \{
%  \begin{array}{cc}
%  e^{(d^i_{up} - l^{i})} -1 & ,\textrm{if}\  \  d^i_{up} \ > \ l^{i} \ \\
%  e^{(l^{i} - L_{max})} -1 & ,\textrm{if}\  \  L_{max} \ < \ l^{i} \ 
%  \end{array}
%  \right. 
%\end{eqnarray}
\noindent Notice how the residual starts rising over zero when approaching the minimum and maximum thresholds, growing exponentially as we pass the thresholds.

\subsubsection{Parabola parameter constraint}
This is only applied when $\mathbf{T}^i$ is a parabola; it penalizes unfeasible solutions for $p^{i}$, $q^{i}$, and $r^{i}$. This constraint ensures that the parabola passes through the positions of the UGV and UAV, $\mathbf{p}^i_g$ and $\mathbf{p}^i_a$. The constraint projects such 3D positions to the 2D plane that contains the points and is perpendicular to the floor, obtaining the 2D suspension points A and B. The suspension points must comply with the following equations: 

%\begin{eqnarray}
%  \label{eq:eq_parameters}
%  \delta^i_{pA} = p{x^2_A} + qx_A + r - y_A \\
%  \delta^i_{pB} = p{x^2_B} + qx_B + r - y_B 
%\end{eqnarray}

\begin{equation}
  \label{eq:eq_parameters_parabola}
  \left.
    \begin{array}{lll}
        \delta^i_{pA} = p{x^2_A} + qx_A + r - y_A \\
        \delta^i_{pB} = p{x^2_B} + qx_B + r - y_B 
    \end{array}
    \right\}
\end{equation}

\subsubsection{Catenary parameter constraint}
This is only applied when $\mathbf{T}^i$ is a catenary; it penalizes unfeasible solutions for $a^i$, $x^i_0$, and $y^i_0$. This constraint ensures that the catenary passes through the positions of the UGV and UAV, $\mathbf{p}^i_g$ and $\mathbf{p}^i_a$. The constraint projects such 3D positions to the 2D plane that contains the points and is perpendicular to the floor, obtaining the 2D suspension points A and B. The suspension points must comply with the following equations: 

%\begin{eqnarray}
%  \label{eq:eq_parameters}
%  \delta^i_{cA} = a*cosh(\frac{x_A- x_0}{a}) + y_0 - y_A \\
%  \delta^i_{cB} = a*cosh(\frac{x_B- x_0}{a}) + y_0 - y_B
%\end{eqnarray}

\begin{equation}
  \label{eq:eq_parameters_catenary}
  \left.
  \begin{array}{lll}
  \delta^i_{cA} &=& a\ cosh(\frac{x_A- x_0}{a}) + y_0 - y_A \\
  \delta^i_{cB} &=& a\ %.000000000000000
  cosh(\frac{x_B- x_0}{a}) + y_0 - y_B
  \end{array}
\right\}
\end{equation}

% ME FALTA ACTUALZAR ESTO
%\delta^i_{op},  d^i_{op,j} ,  \rho_{op} , \delta^i_up , L_{max} , d^i_{up} , \delta^i_{pA} , \delta^i_{pB}, \delta^i_{pA}, \delta^i_{pB}

%Debo agregar en algun lado que el tratamamiento de las ecuaciones son en 2D por lo que siempre se hace una conversión de 3D a  2D

%----------------------------------------------------------
\section{Experimental Results}
\label{sec:experiments}
%----------------------------------------------------------------------------------------

Here, we focus on the experimental validation of the proposed method. We will benchmark the solution against \cite{smartinezr2023}, which is, to the best of our knowledge, the only approach for planning the path and trajectory of a UGV and UAV linked by a 3D hanging tether. To this end, the proposed approach will be tested/validated in the same scenarios (S1, S2, S3, S4, S5) and the same initial and final configurations of the robotic system as in \cite{smartinezr2023}, which are publicly available\footnote{\url{https://github.com/robotics-upo/marsupial_optimizer/tree/noetic/experiments_execution_instructions}}. These scenarios are illustrated in Fig. \ref{fig:scenarios}. 

%\rev{The experiments used for comparison with our method, as described in \cite{smartinezr2023}, are available at \footnote{\url{https://github.com/robotics-upo/marsupial_optimizer/tree/noetic/experiments_execution_instructions}}. Our experiments retain the same five scenarios (S1, S2, S3, S4, S5), as illustrated in Fig. \ref{fig:scenarios}. Additionally, we use the same initial positions—two for each scenario—and maintain the same target points as described.}
%The approach is tested in the five scenarios ($S1$, $S2$, $S3$, $S4$, $S5$) 
%The scenarios are shown in Figure \ref{fig:scenarios}, considering open, narrow, confined, and cluttered spaces. Two different starting configurations are used in each scenario: one placed in an open area, which we refer to as the open problem, and another in the clutter, referred to as the cluttered problem. 

All the experiments have been run on the same computer, an eight-core AMD Ryzen 7 6800H CPU, 32 GB of RAM running Ubuntu 20.04 and ROS Noetic. The source code is available on an anonymous GitHub repository \footnote{\url{https://anonymous.4open.science/r/efficient_tether_parameterization-040F/}.}.

\begin{figure*}[t!]
  \centering
  \includegraphics[width=0.9\textwidth]{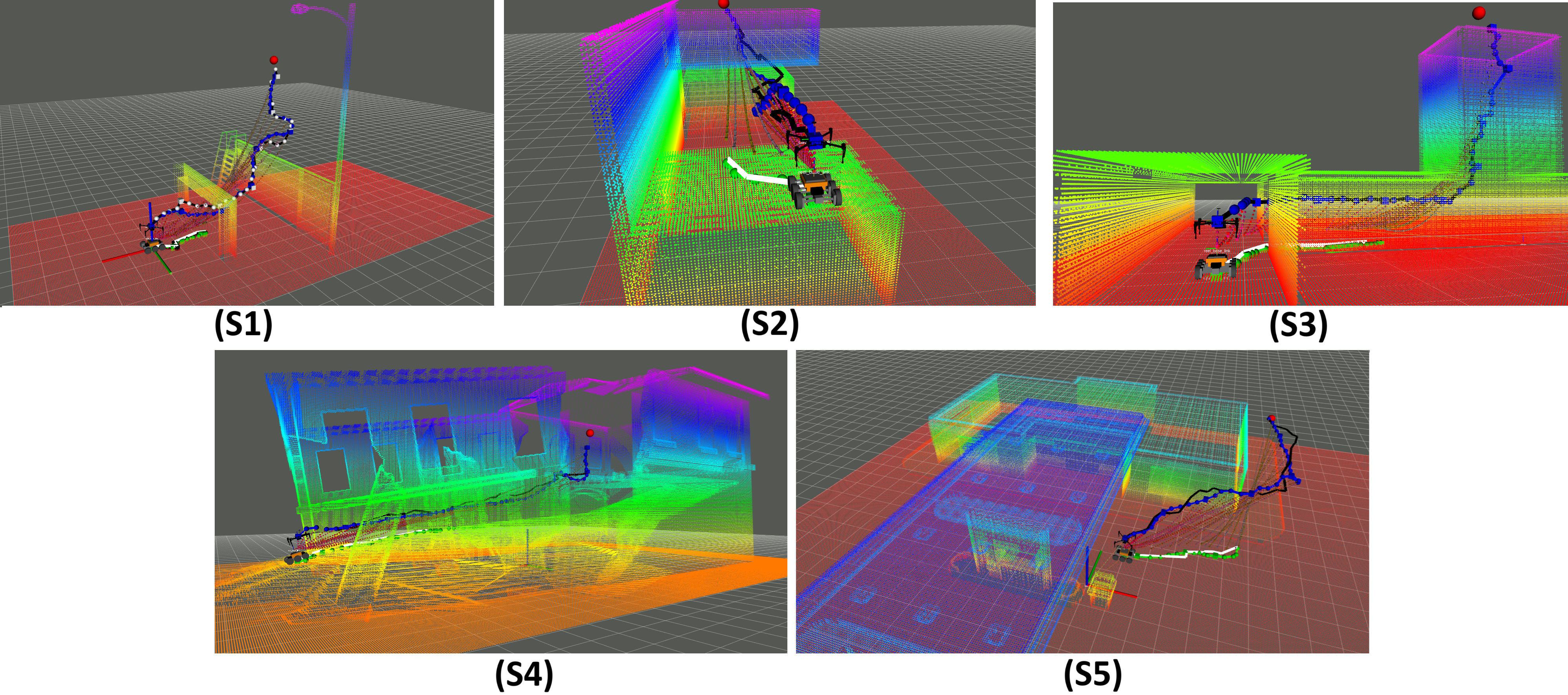}
  \caption{Scenarios considered for validation. S1: Open/constrained space with arc as obstacle. S2: Narrow/constrained space with denied access to UGV. S3: Confined space with outlet duct for UAV. S4: Collapsed Fire Station. S5: Open space gas station.}
  \label{fig:scenarios}
  \vspace*{-5mm}
\end{figure*}

%------------------------------------------
\subsection{RRT* results}

In this section, we describe the results obtained by the RRT* path planner (Section \ref{sec:path_planning}) when using the catenary and parabola approaches to model the shape of the tether. 
The RRT* algorithm executes batches of 500 iterations, checking for a valid solution at the end of the batch. If not, another batch starts. The solution is used as an initial guess in the optimization step. We empirically set the maximum number of iterations to 10000, as we found that RRT* usually finds a solution before the 1000th iteration.  

First, we focus on the execution time required to solve a PDP or a CDP. Note that for a configuration to be valid in our proposed RRT* algorithm, a CDP or a PDP has to be solved between the UAV and UGV poses. In these experiments, we made the RRT* solve both problems, and we saved the execution times associated with each method. Figure \ref{fig:rrt_results} shows the violin plots of the execution times distributions for both Decision Problem methods (catenary and parabola). It is easy to see how the parabola method consistently outperforms the catenary method in mean and median time. Depending on the problem to be solved, the improvement in median times can range from a very noticeable 5x factor in S3, to also a significant 2x factor in S5.

Then, we evaluate the impact of each method on the execution time of the RRT* algorithm as a whole. As seen previously, our method for solving the PDP is faster than the method for solving the CDP. Therefore, the RRT* algorithm should also be faster when using the Parabola approach. Due to the random nature of the RRT* algorithm, we have repeated each experiment one hundred times. Table \ref{tab:results_rrt} shows the mean value and the standard deviation of the execution times of each method in each scenario. The table also shows the Decision Problem rate (DP rate), which stands for the percentage of solutions in which the Decision Problem must be computed, that is, there is no direct line of sight between the ground and the aerial robots. A scenario with DP rate of 0\% indicates that the Decision Problem was never computed, so ground and aerial configurations were able to be connected with a taut tether in all cases, while a scenario with DP rate of 100\% indicates that we needed to solve the Decision Problem for all ground and aerial configurations tested. As a rule of thumb, the higher the DP rate, the more complex/cluttered the scenario is.

We can see in Table \ref{tab:results_rrt} how the RRT* using PDP is clearly faster than using CPD in both mean time and standard deviation in all scenarios. The improvement depends on the type of problem to solve. The major improvement, 18x to 25x faster, is found in S1, which is relatively open and with obstacles in the middle of the path. S2, S3 and S4, cluttered closed volumes, are also solved very efficiently, with improvements ranging from 4x to 6x in time. Finally, S5 shows the smallest, yet significant, improvements of 1.9x to 2.8x faster. This is a large and very open scenario, which benefits solutions based on straight lines between robots (taut tether). In these cases, it is not necessary to compute the catenary and, therefore, the improvement of the PDP algorithm becomes less visible. These results match the DP rate, which is highest in S1 (greatest improvement), while S5 provides the lowest rate (smallest improvement).

In summary, the experimental results show that the proposed Parabola Decision Problem is a fast and convenient approximation to model a catenary in the RRT* algorithm.

%Table \ref{tab:results_rrt} also presents

\begin{figure*}[t!]
  \centering
  \includegraphics[width=0.141\linewidth]{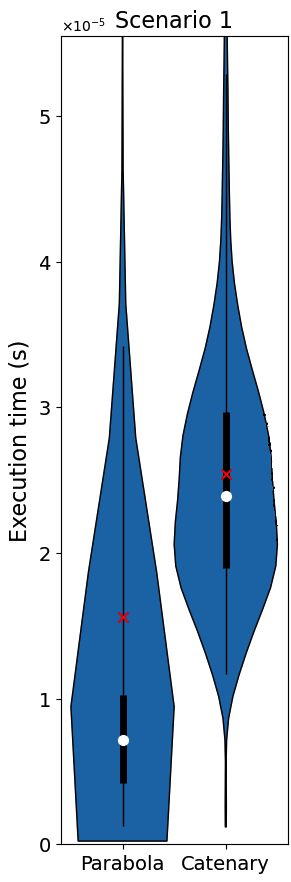}
  \hfill
  \includegraphics[width=0.15\linewidth]{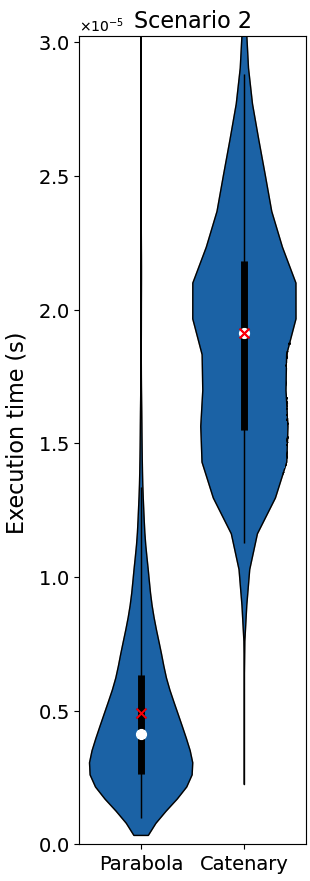}
  \hfill
  \includegraphics[width=0.15\linewidth]{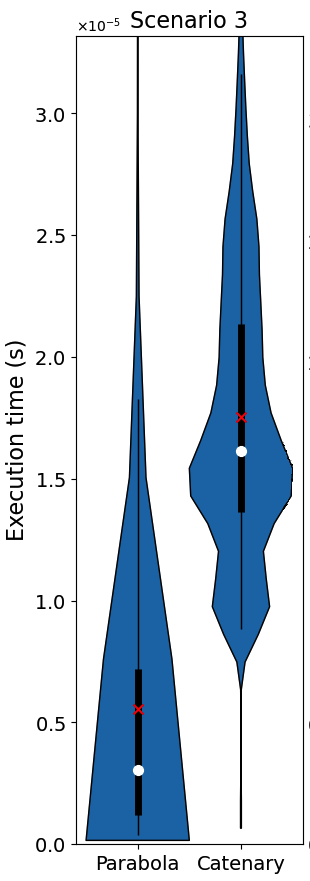}
  \hfill
  \includegraphics[width=0.151\linewidth]{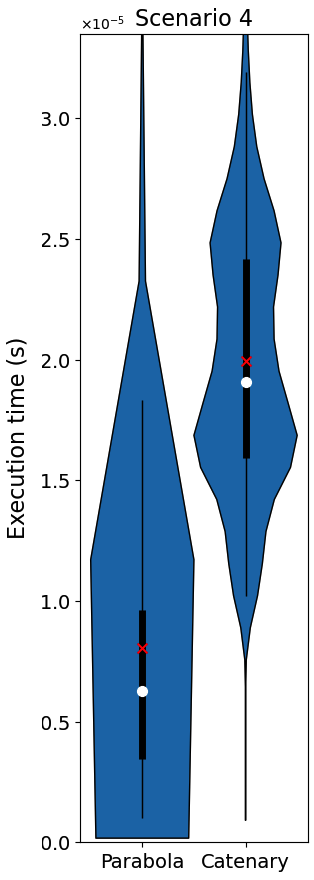}
  \hfill
  \includegraphics[width=0.141\linewidth]{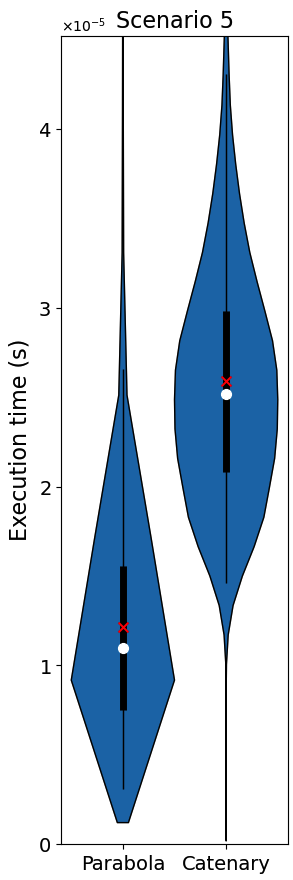}
  \caption{Violin plots of the distribution of the execution times of the Decision Problem test bench with Parabola and Catenary methods. In blue, the approximate shape of the distribution is represented. The mean value is represented as a red cross, the median as a white dot and the quartiles are linked with a black line. }
  \label{fig:rrt_results}
\end{figure*}

% \begin{table}[t!]
% 	\caption{Execution time results on RRT* and preprocessing time in each scenario}
%  \label{tab:results_rrt}
% 	\begin{center}
% 		\begin{tabular}{|c|cc|c|}
% 			\hline
% 			\multirow{2}{*}{Scenario} & \multicolumn{2}{c|}{Execution Time (s)} & Preprocessing \\ \cline{2-3}
%             & Catenary & Parabola & Time (s) \\
% 			\hline S1.1 & 4.46$\pm$3.61 & \textbf{1.74}$\pm$0.86 & \multirow{2}{*}{28.1} \\
% 			\cline{1-3} S1.2 & 7.67$\pm$11.88 & \textbf{3.33}$\pm$1.91 & \\
% 			\hline S2.1 & 2.33$\pm$2.53  & \textbf{1.38}$\pm$0.52 &\multirow{2}{*}{152.6} \\
% 			 \cline{1-3} S2.2 & \textbf{1.44}$\pm$0.40 &  1.45$\pm$0.42 & \\
% 			\hline S3.1 & 3.07$\pm$1.55 & \textbf{2.90}$\pm$0.66 & \multirow{2}{*}{400.3} \\
% 			\cline{1-3} S3.2 & 4.01$\pm$1.93 & \textbf{2.64}$\pm$0.66 & \\
% 			 \hline S4.1  & 2.57$\pm$2.22 & \textbf{1.55}$\pm$0.40 & \multirow{2}{*}{3152.2} \\
% 			\cline{1-3} S4.2 & 2.43$\pm$2.20 & \textbf{1.51}$\pm$0.41 & \\
% 			\hline S5.1 & 14.47 \
% 			\cline{1-3} S5.2 & 10.74$\pm$11.21 & \textbf{7.39}$\pm$2.57 & \\
% 			\hline
% 		\end{tabular}
% 	\end{center}
% \vspace{-6mm}	
% \end{table}

\begin{table}[t!]
	\caption{Execution time and decision problem rate results on RRT* }
 \label{tab:results_rrt}
	\begin{center}
		\begin{tabular}{|c|cc|c|}
			\hline
			\multirow{2}{*}{Scenario} & \multicolumn{2}{c|} {Execution Time (s)} & \multirow{2}{*}{DP rate (\%) } \\ \cline{2-3}
            & Catenary & Parabola &  \\
			\hline S1.1 & 5.16$\pm$6.66 & \textbf{0.28}$\pm$0.13 &93 \\
			\cline{1-4} S1.2 & 7.73$\pm$10.31 & \textbf{0.30}$\pm$0.10  & 91 \\
			\hline S2.1 & 0.31$\pm$0.36  & \textbf{0.06}$\pm$0.03  & 81 \\
			 \hline S2.2 & 0.35$\pm$0.39 &  \textbf{0.07}$\pm$0.03  & 80 \\
			\hline S3.1 & 3.08$\pm$1.56 & \textbf{0.47}$\pm$0.10 & 84 \\
			\hline S3.2 & 3.01$\pm$1.93 & \textbf{0.75}$\pm$0.26  & 86 \\
			 \hline S4.1  & 2.57$\pm$1.16 & \textbf{0.48}$\pm$0.19 & 81 \\
			\hline S4.2 & 2.43$\pm$2.20 & \textbf{0.44}$\pm$0.39  & 87 \\
			\hline S5.1 & 14.47$\pm$11.69 & \textbf{5.06}$\pm$2.75 & 69 \\
			\hline S5.2 & 10.74$\pm$5.46 & \textbf{5.56}$\pm$2.08 & 71 \\
			\hline
		\end{tabular}
	\end{center}
\vspace{-6mm}	
\end{table}

%% Preprocessing times not included.

% Final times:

%

%----------------------------------------------------------------------------
\subsection{Optimization results}

Different experiments were carried out to test the parabola approach within the optimization process. The results are presented in Table \ref{tab:experiments_new}. We benchmarked the new approaches with respect to \cite{smartinezr2023}.

We tested the methods in two different initial positions for each scenario. All experiments have been executed 100 times with the same set of parameters detailed below. The weight factors, empirically selected, are in the range $[0, 1]$.

\begin{itemize}
    \item Weighting factors for UGV: $\gamma_{eg}$ = 0.2, $\gamma_{og}$ = 0.08, $\gamma_{trav}$ = 0.5, $\gamma_{sg}$ = 0.12, $\gamma_{vg}$ = 0.05, $\gamma_{ag}$ = 0.005. 
    \item Weighting factors for UAV: $\gamma_{ea}$ = 0.25, $\gamma_{oa}$ = 0.08, $\gamma_{sa}$ = 0.14, $\gamma_{va}$ = 0.05, $\gamma_{aa}$ = 0.005. 
    \item Weighting factors for Tether: $\gamma_{ot}$ = 0.25, $\gamma_{u}$ = 0.1, $\gamma_{p}$ = 0.1
    \item Equidistance threshold: $\rho_{eg}$ and $\rho_{ea}$ are computed based on the initial path.   
    \item Collision threshold: $\rho_{oa}$ = 1.2, $\rho_{ot}$ = 0.1, $\rho_{og}$ = 1.2. 
    \item Traversability threshold: $\rho_{trav}$ = 0.001.
    \item Smoothness threshold: $\rho_{sg} $ = $\frac{\pi}{9}$, $\rho_{sa} $ = $\frac{\pi}{9}$.
    \item Desired velocity (m/s): $\rho_{vg}$ = 1.0, $\rho_{va}$ = 1.0.
    \item Desired velocity (m/s): $\rho_{ag}$ = 0.0, $\rho_{aa}$ = 0.0.
    \item Tether avoidance: $\beta$ = 10.0 .
\end{itemize}

The results of the experiments are summarized in Table \ref{tab:experiments_new}. Next paragraphs evaluate the different metrics:

\begin{table*}[t!]
\caption{Results of the optimizer for different parameterizations related to computing time, distance to obstacles, and trajectory time in simulated environments. SI: Scenario; M: Method (P = Parabola, C = Catenary, \cite{smartinezr2023})  F: Feasibility [\%]; T: Time for optimized solution [$s$];  DO: Distance to obstacles[$m$]; V: Velocity of the optimized trajectory [$m/s$]; A: Acceleration of the optimized trajectory [$m/s^2$]} 
\centering
\label{tab:experiments_new}
\footnotesize
\begin{adjustbox}{max width=\textwidth}
\begin{tabular}{|c|ccc|cc|cccccc|cccccc|}
\hline 
\multicolumn{4}{|c}{\textbf{ALL}} & 
\multicolumn{2}{|c}{\textbf{TETHER}} & 
\multicolumn{6}{|c|}{\textbf{UGV}}            & 
\multicolumn{6}{|c|}{\textbf{UAV}}            \\
\hline 
  \multicolumn{1}{|c}{\textbf{SI}} &
  \multicolumn{1}{|c}{\textbf{M}} &
  \multicolumn{1}{|c}{\textbf{F}} &
  \multicolumn{1}{|c}{\textbf{T}} &
  \multicolumn{1}{|c}{\textbf{\begin{tabular}[c]{@{}c@{}}Mean\\ DO\end{tabular}}} &
  \multicolumn{1}{|c}{\textbf{\begin{tabular}[c]{@{}c@{}}Min\\ DO\end{tabular}}} &
  \multicolumn{1}{|c}{\textbf{\begin{tabular}[c]{@{}c@{}}Mean\\ DO\end{tabular}}} &
  \multicolumn{1}{|c}{\textbf{\begin{tabular}[c]{@{}c@{}}Min\\ DO\end{tabular}}} &
  \multicolumn{1}{|c}{\textbf{\begin{tabular}[c]{@{}c@{}}Mean\\ V\end{tabular}}} &
  \multicolumn{1}{|c}{\textbf{\begin{tabular}[c]{@{}c@{}}Max\\ V\end{tabular}}} &
  \multicolumn{1}{|c}{\textbf{\begin{tabular}[c]{@{}c@{}}Mean\\ A\end{tabular}}} &
  \multicolumn{1}{|c}{\textbf{\begin{tabular}[c]{@{}c@{}}Max\\ A\end{tabular}}} &
  \multicolumn{1}{|c}{\textbf{\begin{tabular}[c]{@{}c@{}}Mean\\ DO\end{tabular}}} &
  \multicolumn{1}{|c}{\textbf{\begin{tabular}[c]{@{}c@{}}Min\\ DO\end{tabular}}} &
  \multicolumn{1}{|c}{\textbf{\begin{tabular}[c]{@{}c@{}}Mean\\ V\end{tabular}}} &
  \multicolumn{1}{|c}{\textbf{\begin{tabular}[c]{@{}c@{}}Max\\ V\end{tabular}}} &
  \multicolumn{1}{|c}{\textbf{\begin{tabular}[c]{@{}c@{}}Mean\\ A\end{tabular}}} &
  \multicolumn{1}{|c|}{\textbf{\begin{tabular}[c]{@{}c@{}}Max\\ A\end{tabular}}} \\
\hline 
\multirow{3}{*}{S1.1} & P & \textbf{98}  & 7.049  & 1.278 & \textbf{0.959} & 5.378  & \textbf{2.629} & 0.518 & 0.799 & 0.230  & 0.292  & 5.299  & \textbf{2.498} & 0.928 & 1.085 & 0.340 & 0.159  \\
& C & 85  & \textbf{6.666}  & 1.404 & 0.480 & 4.592  & 2.022 & 0.528 & 0.880 & 0.262  & 0.173  & 4.593  & 1.829 & 0.880 & 1.086  & 0.313  & 0.046  \\
 & \cite{smartinezr2023} & 86 & 465.9 & 1.14 & 0.25 & 2.28 & 1.55 & 0.36 & 1.07 & 0.02 & 0.81  & 1.87 & 0.80 & 0.97 & 1.19 & -0.002 & -0.09 \\ 
\hline 

\multirow{3}{*}{S1.2} & P & \textbf{97}  & \textbf{18.162} & 4.823 & \textbf{2.767} & 15.182 & 7.816 & 2.317 & 1.531 & 0.655  & 0.331  & 14.721 & 7.645 & 2.513 & 1.611 & 0.691 & 0.343  \\
 & C & 91  & 55.195 & 4.444 & 2.286 & 16.254 & \textbf{8.175} & 1.073 & 0.945 & 0.698  & 0.263  & 15.890 & \textbf{7.903} & 1.203 & 1.017  & 0.745  & 0.285  \\
 & \cite{smartinezr2023} & 80 & 529.7 & 1.31 & 0.16 & 3.76 & 1.60 & 0.43 & 1.25 & 0.02 & 0.78  & 2.21 & 0.89 & 0.91 & 1.19 & -0.002 & -0.08 \\
\hline 

\multirow{3}{*}{S2.1} & P & \textbf{100} & \textbf{1.495}  & 0.694 & \textbf{0.151} & 0.777  & 0.548 & 0.141 & 0.451 & 0.016  & 0.202  & 1.122  & \textbf{0.759} & 1.053 & 1.177 & 0.002 & -0.018 \\
 & C & \textbf{100} & 9.903  & 0.142 & 0.150 & 0.546  & \textbf{0.643} & 0.757 & 0.019 & 0.596  & 0.000  & 0.643  & 0.643 & 1.093 & -0.001 & -0.009 & 0.000  \\
 & \cite{smartinezr2023} & 95 & 123.6 & 0.72 & 0.10 & 0.69 & 0.56 & 0.13 & 0.91 & 0.02 & 0.96  & 1.19 & 0.76 & 1.01 & 1.15 & 0.002  & 0.01  \\
\hline 

\multirow{3}{*}{S2.2} & P & \textbf{100} & 1.188  & 0.735 & \textbf{0.152} & 1.156  & 1.003 & 0.082 & 0.576 & 0.019  & 0.554  & 1.225  & \textbf{0.782} & 1.026 & 1.119 & 0.001 & 0.013  \\
 & C & 99  & \textbf{0.371}  & 0.725 & 0.125 & 1.186  & \textbf{1.060} & 0.087 & 0.467 & 0.014  & 0.308  & 1.192  & 0.686 & 1.021 & 1.087  & -0.002 & -0.026 \\
 & \cite{smartinezr2023} &  95 & 109.3 & 0.75 & 0.13 & 0.75 & 0.58 & 0.19 & 1.00 & 0.03 & 1.09  & 1.16 & 0.67 & 0.99 & 1.09 & 0.000  & 0.07  \\
\hline 

\multirow{3}{*}{S3.1} & P & 92  & 1.493  & 0.645 & 0.060 & 1.139  & 0.673 & 0.541 & 1.124 & -0.003 & -0.140 & 1.077  & 0.638 & 1.154 & 1.495 & 0.016 & 0.039  \\
 & C & \textbf{95}  & \textbf{0.707}  & 0.647 & 0.103 & 1.143  & \textbf{0.754} & 0.550 & 1.041 & -0.004 & 0.039  & 1.031  & \textbf{0.657} & 1.123 & 1.382  & 0.014  & -0.036 \\
 & \cite{smartinezr2023} &  85 & 213.6 &  0.67 & \textbf{0.13} & 0.98 & 0.62 & 0.50 & 1.07 & 0.00 & 0.21  & 0.81 & 0.56 & 1.02 & 1.21 & 0.003  & 0.16  \\
\hline 

\multirow{3}{*}{S3.2} & P & \textbf{93}  & 1.271  & 0.711 & \textbf{0.157} & 1.416  & 0.622 & 0.583 & 1.133 & 0.018  & -0.169 & 1.360  & \textbf{0.658} & 1.080 & 1.348 & 0.041 & 0.039  \\
 & C & \textbf{93}  & \textbf{0.641}  & 0.694 & 0.112 & 1.138  & \textbf{0.685} & 0.617 & 1.146 & -0.002 & -0.031 & 1.030  & 0.633 & 1.090 & 1.343  & 0.016  & 0.054  \\
 & \cite{smartinezr2023} & 84 & 214.9  & 0.67 & 0.14 & 0.97 & 0.61 & 0.55 & 1.08 & 0.00 & -0.20 & 0.79 & 0.54 & 0.98 & 1.16 & 0.004  & 0.18  \\
\hline 

\multirow{3}{*}{S4.1} & P & \textbf{94}  & 1.495  & 0.666 & 0.078 & 1.786  & 1.055 & 0.713 & 1.122 & 0.011  & 0.129  & 1.227  & \textbf{0.777} & 1.110 & 1.242 & 0.000 & 0.029  \\
 & C & 55  & \textbf{0.851}  & 0.668 & 0.066 & 2.024  & 0.634 & 0.808 & 1.288 & 0.025  & 0.074  & 1.576  & 0.678 & 1.130 & 1.467  & 0.030  & -0.044 \\
 & \cite{smartinezr2023} & 83 & 539.0  & 0.74 & \textbf{0.14} & 2.12 & \textbf{1.40} & 0.45 & 1.14 & 0.02 & 1.00  & 1.16 & 0.65 & 0.94 & 1.12 & -0.002 & -0.06 \\
\hline 

\multirow{3}{*}{S4.2} & P & \textbf{99}  & 1.448  & 0.737 & \textbf{0.149} & 2.063  & 1.552 & 0.590 & 1.046 & 0.014  & 0.284  & 1.260  & \textbf{0.771} & 1.113 & 1.231 & 0.001 & 0.024  \\
 & C & 84  & \textbf{0.758}  & 0.751 & 0.131 & 2.501  & \textbf{1.574} & 0.614 & 1.115 & 0.037  & 0.199  & 1.719  & 0.733 & 1.085 & 1.254  & 0.037  & -0.043 \\
 & \cite{smartinezr2023} & 76 & 708.6 & 0.76 & 0.10 & 1.70 & 1.07 & 0.47 & 1.32 & 0.01 & 0.86  & 1.15 & 0.65 & 0.92 & 1.22 & -0.002 & -0.32 \\
\hline 

\multirow{3}{*}{S5.1} & P & \textbf{100} & 3.747  & 1.165 & \textbf{0.310} & 2.566  & \textbf{1.124} & 0.736 & 0.994 & 0.041  & 0.124  & 2.231  & \textbf{1.216} & 1.011 & 1.101 & 0.035 & -0.020 \\
 & C & 97  & \textbf{2.005}  & 1.025 & 0.283 & 3.293  & 0.858 & 0.888 & 1.783 & 0.146  & -0.327 & 2.948  & 1.017 & 1.044 & 1.561  & 0.165  & -0.300 \\
 & \cite{smartinezr2023} &  98 & 277.8 & 1.09 & 0.23 & 2.26 & 0.76 & 0.41 & 1.15 & 0.03 & 1.07  & 1.74 & 0.83 & 0.97 & 1.27 & -0.004 & -0.14 \\
\hline 

\multirow{3}{*}{S5.2} & P & \textbf{100} & 3.366  & 1.160 & \textbf{0.309} & 4.196  & \textbf{0.935} & 0.745 & 0.988 & 0.084  & 0.284  & 3.520  & \textbf{1.049} & 0.996 & 1.130 & 0.075 & -0.010 \\
 & C & 80  & \textbf{2.050}  & 1.039 & 0.223 & 4.184  & 0.794 & 0.875 & 1.613 & 0.115  & -0.065 & 3.455  & 0.966 & 1.049 & 1.507  & 0.103  & -0.025 \\
 & \cite{smartinezr2023} &  95 & 447.7  & 1.14 & 0.22 & 2.53 & 0.63 & 0.57 & 1.08 & 0.02 & 1.02  & 1.66 & 0.65 & 0.92 & 1.11 & -0.004 & -0.12\\

\hline 
\end{tabular}
\end{adjustbox}
\vspace*{-2mm}
\end{table*}

subsection{Feasibility}
The result of the planners is considered feasible (F) when the computed trajectory is collision-free for every agent of the system (UAV, UGV, tether). This means that $d^i_{og} > \rho_{og}$ and $d^i_{oa} > \rho_{oa}$ for every UGV and UAV state, respectively. In the case of the tether, $d^i_{ot,j} > \rho_{ot}$ for every sample in each tether configuration. In this context, we can see that the proposed solutions are consistently more feasible than \cite{smartinezr2023}, the average feasibility of the trajectory planner with the parabola approach is 97.3\%. Besides, with the catenary approach, the average feasibility is 87.9\%. In both, the unfeasible solutions are due to tether collisions. %The improvement in the results is mainly due to what was explained above. Using an analytical equation, the optimizer then better finds the gradient in which costs decrease and therefore improves the results according to the defined restriction. Thus the distance of the tether to the obstacles is better optimized than the catenary method.

\subsubsection{Distance to obstacles}
Regarding the tether, the parabola approach provides the larger minimal distance from obstacles (Min DO) in all scenarios except S3.1 and S4.1, followed by the catenary and finally \cite{smartinezr2023}. In general terms, this means that the computed tethers with the parabola and catenary are safer in terms of distance to obstacles. With regard to the obstacle distance of the UGV and UAV, the safety values also improve. On average, the minimum values (Min DO) for UGV are 1.79m and 1.71m, for the parabola and catenary approaches, respectively. In the case of UAV, the average for the minimum value (Min DO) is 1.67m and 1.57m. Therefore, both methods behave similarly in terms of distance to obstacle. In general, the Parabola offers slightly better results than the Catenary, but the difference seems to be negligible. 

\subsubsection{Computation Time}
Regarding the computation time (T), we can see that the parabola and catenary approaches are two orders of magnitude faster on average than \cite{smartinezr2023}. This is mainly produced by the analytical parameterization of the tether in the optimizer (Section \ref{sec:implementation}), unlike the use of numerical methods to solve the transcendental catenary equation \cite{BOOKOFCURVES}. We use the results of \cite{smartinezr2023} as reported in the paper. %Notice that those results were obtained with an older computer compared to the present approach, but this cannot justify the large difference in computational time. 
We can also see that the parabola approach is slightly slower than the catenary approach in most scenarios. This delay comes from a larger number of iterations in the parabola solver, resulting in better solutions and with higher feasibility, as previously pointed out. 

\subsubsection{Velocities and Accelerations}
Table \ref{tab:experiments_new} shows the optimized velocities and accelerations (V, A). In general, both methods keep the velocity and acceleration values close to the desired values ($\rho_{vg}$ = 1.0, $\rho_{va}$ = 1.0, $\rho_{ag}$ = 0.0, $\rho_{aa}$ = 0.0).

In summary, the results provided by the new approaches clearly overcome the results presented in \cite{smartinezr2023}. We can see how the new approaches have similar or higher feasibility (F) in all scenarios, together with a clear reduction in computational time (T) of almost two orders of magnitude. This improvement is directly related to the use of analytical equations for the tether (parabola and catenary), which enables the optimizer to perform faster and more accurate gradients. The rest of the parameters, such as distance to obstacles, velocities, and accelerations, are of the same order in the three solutions, but the parabola and catenary improve consistently \cite{smartinezr2023}. 

Among the proposed approaches, Parabola seems to be the best solution, because it provides a clearly higher feasibility in almost all scenarios, while it offers similar capabilities in the rest of the parameters of the planned trajectory.

%-----------------------------------------------------------
\section{Conclusions and Future Work}
\label{sec:conclusions}
%\section{Discussion}
% Delete the text and write your Discussion here:
%------------------------------------

The paper proposes a novel approach for the efficient parameterization and estimation of the state of a hanging tether for path and trajectory planning of a tethered marsupial robotic system combining an unmanned ground vehicle (UGV) and an unmanned aerial vehicle (UAV). The paper proposes integrating into the trajectory state to optimize the parameters that define the tether curve, and also demonstrates that the parabola curve approximation is a good and efficient representation of the tether.

Experimental results demonstrate that the approximation using a parameterization curve can generate smooth, collision-free trajectories in a fraction of the time taken by the state-of-the-art methods, thus improving the feasibility and efficiency of the obtained solutions. Furthermore, the implementation of the RRT* planning algorithm with the use of a parabola approach based on PDP improves the time calculation in complex three-dimensional environments, such as confined or obstacle-ridden spaces.

We notice that the parabola approximation is an effective alternative for trajectory planning in UAV-UGV systems, by significantly reducing the computation time without compromising the quality of the solutions. This opens new possibilities for the implementation of marsupial robotic systems in missions where real-time trajectory planning is required and in complex three-dimensional environments.

Future work will consider adapting the optimizer to incorporate local planning capabilities, enabling it to modify existing global planner solutions. This integration would provide greater flexibility in adjusting the trajectory as new information or obstacles arise. In addition, incorporating kinematical models of the UGV, UAV, and tether to allow for more realistic constraints, providing a closer representation of the physical capabilities and limitations of the system. 

%You should try to show insight into what happened and why, and how things could have
%gone differently. If you have presented any background theory, try to tie it together with
%your results. How do they relate? If they differ, try to explain why. Even if things didn’t
%work out as intended, a good discussion shows that you’ve understood what went wrong
%and how you could potentially overcome these obstacles. 

%The conclusion should summarise your main results and main points from the discussion.
%A rule of thumb is to not present any new information (information not found in the results or discussion).

%-----------------------------------------------------------
\bibliographystyle{IEEEtran} 
\bibliography{IEEEabrv}

\begin{thebibliography}{10}
\providecommand{\url}[1]{#1}
\csname url@rmstyle\endcsname
\providecommand{\newblock}{\relax}
\providecommand{\bibinfo}[2]{#2}
\providecommand\BIBentrySTDinterwordspacing{\spaceskip=0pt\relax}
\providecommand\BIBentryALTinterwordstretchfactor{4}
\providecommand\BIBentryALTinterwordspacing{\spaceskip=\fontdimen2\font plus
\BIBentryALTinterwordstretchfactor\fontdimen3\font minus \fontdimen4\font\relax}
\providecommand\BIBforeignlanguage[2]{{%
\expandafter\ifx\csname l@#1\endcsname\relax
\typeout{** WARNING: IEEEtran.bst: No hyphenation pattern has been}%
\typeout{** loaded for the language `#1'. Using the pattern for}%
\typeout{** the default language instead.}%
\else
\language=\csname l@#1\endcsname
\fi
#2}}

\bibitem{robotics12040117}
\BIBentryALTinterwordspacing
M.~N. Marques, S.~A. Magalhães, F.~N. Dos~Santos, and H.~S. Mendonça, ``Tethered unmanned aerial vehicles—a systematic review,'' \emph{Robotics}, vol.~12, no.~4, 2023. [Online]. Available: \url{https://www.mdpi.com/2218-6581/12/4/117}
\BIBentrySTDinterwordspacing

\bibitem{MooreIROS2018}
P.~G. Stankiewicz, S.~Jenkins, G.~E. Mullins, K.~C. Wolfe, M.~S. Johannes, and J.~L. Moore, ``A motion planning approach for marsupial robotic systems,'' in \emph{2018 IEEE/RSJ International Conference on Intelligent Robots and Systems (IROS), Madrid, 2018}.\hskip 1em plus 0.5em minus 0.4em\relax IEEE, 2018, pp. 1--9.

\bibitem{6961531}
C.~{ Papachristos } and A.~{Tzes}, ``The power-tethered uav-ugv team: A collaborative strategy for navigation in partially-mapped environments,'' in \emph{22nd Mediterranean Conference on Control and Automation}, 2014, pp. 1153--1158.

\bibitem{850822}
R.~Murphy, ``Marsupial and shape-shifting robots for urban search and rescue,'' \emph{IEEE Intelligent Systems and their Applications}, vol.~15, no.~2, pp. 14--19, 2000.

\bibitem{9202196}
W.~Walendziuk, D.~Oldziej, and M.~Slowik, ``Power supply system analysis for tethered drones application,'' in \emph{2020 International Conference Mechatronic Systems and Materials (MSM)}, 2020, pp. 1--6.

\bibitem{XiaoSSRR2018}
J.~D. X.~Xiao, Y.~Fan and R.~Murphy, ``Indoor uav localization using a tether,'' in \emph{2018 IEEE International Symposium on Safety, Security, and Rescue Robotics (SSRR), Philadelphia, PA, 2018}.\hskip 1em plus 0.5em minus 0.4em\relax IEEE, 2018, pp. 1--6.

\bibitem{9561062}
S.~Martínez-Rozas, D.~Alejo, F.~Caballero, and L.~Merino, ``Optimization-based trajectory planning for tethered aerial robots,'' in \emph{2021 IEEE International Conference on Robotics and Automation (ICRA)}, 2021, pp. 362--368.

\bibitem{BOOKOFCURVES}
E.~Lockwood, \emph{A Book of Curves}, 1st~ed.\hskip 1em plus 0.5em minus 0.4em\relax Cambridge University Press, ISBN: 103156, 1961.

\bibitem{autonomousvisual}
X.~Xiao, J.~Dufek, and R.~R. Murphy, ``Autonomous visual assistance for robot operations using a tethered uav,'' in \emph{Field and Service Robotics}, G.~Ishigami and K.~Yoshida, Eds.\hskip 1em plus 0.5em minus 0.4em\relax Singapore: Springer Singapore, 2021, pp. 15--29.

\bibitem{framworktether}
R.~Lima and G.~Pereira, ``A multi-model framework for tether-based drone localization,'' \emph{Intell Robot Syst}, vol. 108, no.~20, 2023.

\bibitem{uavfire}
B.~Viegas, C.and~Chehreh and J.~e.~a. Andrade, ``Tethered uav with combined multi-rotor and water jet propulsion for forest fire fighting,'' \emph{Intell Robot Syst}, vol. 104, no.~21, 2022.

\bibitem{9476778}
R.~R. Lima and G.~A.~S. Pereira, ``On the development of a tether-based drone localization system,'' in \emph{2021 International Conference on Unmanned Aircraft Systems (ICUAS)}, 2021, pp. 195--201.

\bibitem{9364354}
D.~S. D’Antonio, G.~A. Cardona, and D.~Saldaña, ``The catenary robot: Design and control of a cable propelled by two quadrotors,'' \emph{IEEE Robotics and Automation Letters}, vol.~6, no.~2, pp. 3857--3863, 2021.

\bibitem{LARANJEIRA2020107018}
M.~Laranjeira, C.~Dune, and V.~Hugel, ``Catenary-based visual servoing for tether shape control between underwater vehicles,'' \emph{Ocean Engineering}, vol. 200, p. 107018, 2020.

\bibitem{8848946}
X.~Xiao, J.~Dufek, and R.~Murphy, ``Benchmarking tether-based uav motion primitives,'' in \emph{2019 IEEE International Symposium on Safety, Security, and Rescue Robotics (SSRR)}, 2019, pp. 51--55.

\bibitem{drones7020073}
\BIBentryALTinterwordspacing
B.~Martinez~Rocamora, R.~R. Lima, K.~Samarakoon, J.~Rathjen, J.~N. Gross, and G.~A.~S. Pereira, ``Oxpecker: A tethered uav for inspection of stone-mine pillars,'' \emph{Drones}, vol.~7, no.~2, 2023. [Online]. Available: \url{https://www.mdpi.com/2504-446X/7/2/73}
\BIBentrySTDinterwordspacing

\bibitem{battocletti2024entanglementdefinitionstetheredrobots}
\BIBentryALTinterwordspacing
G.~Battocletti, D.~Boskos, D.~Tolić, I.~Palunko, and B.~D. Schutter, ``Entanglement definitions for tethered robots: Exploration and analysis,'' 2024. [Online]. Available: \url{https://arxiv.org/abs/2402.04909}
\BIBentrySTDinterwordspacing

\bibitem{capitán2024efficientstrategypathplanning}
\BIBentryALTinterwordspacing
J.~Capitán, J.~M. Díaz-Báñez, M.~A. Pérez-Cutiño, F.~Rodríguez, and I.~Ventura, ``An efficient strategy for path planning with a tethered marsupial robotics system,'' 2024. [Online]. Available: \url{https://arxiv.org/abs/2408.02141}
\BIBentrySTDinterwordspacing

\bibitem{smartinezr2023}
S.~Martínez-Rozas, D.~Alejo, F.~Caballero, and L.~Merino, ``Path and trajectory planning of a tethered uav-ugv marsupial robotic system,'' \emph{IEEE Robotics and Automation Letters}, vol.~8, no.~10, pp. 6475--6482, 2023.

\bibitem{behroozi2014fresh}
F.~Behroozi, ``A fresh look at the catenary,'' \emph{European Journal of Physics}, vol.~35, no.~5, p. 055007, 2014.

\bibitem{hatibovic2018algorithm}
A.~Hatibovic and P.~K{\'a}d{\'a}r, ``An algorithm for the parabo1ic approximation of the catenary applicable in both inclined and level spans,'' in \emph{2018 International IEEE Conference and Workshop in {\'O}buda on Electrical and Power Engineering (CANDO-EPE)}.\hskip 1em plus 0.5em minus 0.4em\relax IEEE, 2018, pp. 000\,217--000\,222.

\bibitem{hatibovic2020comparison}
A.~Hatibovic, P.~K{\'a}d{\'a}r, and G.~Morva, ``Comparison of the length of the catenary curve and its parabolic approximation in the span of an overhead line,'' in \emph{2020 IEEE 3rd International Conference and Workshop in {\'O}buda on Electrical and Power Engineering (CANDO-EPE)}.\hskip 1em plus 0.5em minus 0.4em\relax IEEE, 2020, pp. 000\,015--000\,020.

\bibitem{parker2010property}
E.~Parker, ``A property characterizing the catenary,'' \emph{Mathematics Magazine}, vol.~83, no.~1, pp. 63--64, 2010.

\bibitem{karaman_rrt_star}
S.~Karaman and E.~Frazzoli, ``Sampling-based algorithms for optimal motion planning,'' \emph{The International Journal of Robotics Research}, vol.~30, no.~7, pp. 846--894, 2011.

\bibitem{driving_pc}
\BIBentryALTinterwordspacing
P.~Krüsi, P.~Furgale, M.~Bosse, and R.~Siegwart, ``Driving on point clouds: Motion planning, trajectory optimization, and terrain assessment in generic nonplanar environments,'' \emph{Journal of Field Robotics}, vol.~34, no.~5, pp. 940--984, 2017. [Online]. Available: \url{https://onlinelibrary.wiley.com/doi/abs/10.1002/rob.21700}
\BIBentrySTDinterwordspacing

\bibitem{edf_survey}
M.~Jones, J.~Baerentzen, and M.~Sramek, ``3d distance fields: a survey of techniques and applications,'' \emph{IEEE Transactions on Visualization and Computer Graphics}, vol.~12, no.~4, pp. 581--599, 2006.

\bibitem{rviz}
H.~Kam, S.-H. Lee, T.~Park, and C.-H. Kim, ``Rviz: a toolkit for real domain data visualization,'' \emph{Telecommunication Systems}, vol.~60, pp. 1--9, 10 2015.

\bibitem{ceres-solver}
S.~Agarwal, K.~Mierle, and Others, ``Ceres solver,'' \url{http://ceres-solver.org}, 2021.

\bibitem{9337759}
A.~Hatibovic, P.~Kádár, and G.~Morva, ``Comparison of the length of the catenary curve and its parabolic approximation in the span of an overhead line,'' in \emph{2020 IEEE 3rd International Conference and Workshop in Óbuda on Electrical and Power Engineering (CANDO-EPE)}, 2020, pp. 000\,015--000\,020.

\end{thebibliography}
\end{document}